\newtheorem{defn}{Definition}
\newtheorem{rmk}{Remark}
\newtheorem{ex}{Example}
\newtheorem{lemma}{Lemma}
\newcommand{\gnew}[2]{\mathcal{G}^{#1,#2}_{new}}
\newcommand{\ginv}[2]{\mathcal{G}^{#1,#2}_{inv}}
\newcommand{\gobs}[2]{\mathcal{G}^{#1,#2}_{obs}}
\def\Plus{\texttt{+}}
\def\Minus{\texttt{-}}
\DeclarePairedDelimiter\norm{\lVert}{\rVert}%
\newcommand{\skipNow}[1]{}
\newcommand{\todo}[1]{{\color{red}#1}}
\DeclareMathAlphabet{\mathpzc}{OT1}{pzc}{m}{it}
\DeclareMathAlphabet\mathbfcal{OMS}{cmsy}{b}{n}
\begin{document}

\title{Learning from Ontology Streams with Semantic Concept Drift\\\footnote*{\small{Preprint of paper accepted at IJCAI 2017 - to be presented at Melbourne, Australia in August 2017}}}


\author{
Freddy L\'ecu\'e\\
INRIA, France\\
Accenture Labs, Ireland
\And
Jiaoyan Chen\\
Zhejiang University\\
China
\And
Jeff Z. Pan\\
University of Aberdeen\\
United Kingdom
\And
Huajun Chen\\
Zhejiang University\\
China
}

\maketitle

\begin{abstract}
%
Data stream learning has been largely studied for extracting knowledge structures from continuous and rapid data records. 
In the semantic Web, data is interpreted in ontologies and its ordered sequence is represented as an ontology stream. 
Our work exploits the semantics of such streams to tackle the problem of concept drift i.e., unexpected changes in data distribution, causing most of models to be less accurate as time passes. 
To this end we revisited (i) semantic inference in the context of supervised stream learning, and (ii) models with semantic embeddings. 
The experiments show 
accurate prediction with 
data from Dublin and Beijing.
\end{abstract}

\section{Introduction and Related Work}

\skipNow{

OK: missing connection with semantic concept drift
TODO: algo wich used a property / lemma of Section 5.1 - ideally something that connect semantic concept drift and the final algo related to inconsistency - we need to make sure we are learning consistent models 
NO: NO ENOUGH SPACE - -> lemma that say that no considering semantic concept drift build more consistent model
NO: NO ENOUGH SPACE - MAY BE A LEMMA TO SUPPORT THE FOLLOWING MOTIVATION: 
NO: NO ENOUGH SPACE - without semantic concept drift + with entailement + consisten ---> consistent model
OK: algo which connect with semantic concept drift 
OK: understand math
NO: NO NEED: picture to redo
NO: NO ENOUGH SPACE - NOT MAJOR: example 9
\todo{TODO: property on vectors in Section 5.1that can be used in the next section}
OK: Mention that consistency vector is symetric
OK: REMOVED PRB: Different concept subsets are used??? is it random sampling. Ideally each snapshot should be compare to a reference snapshot for comparison across snapshots. Ideally we should learn embeddings
OK: Idea of computing consistency auto-correlation with all other snapshots 
OK: Make sure we have url for data sets
TODO seems to be a property between probability and (32)
NO: NO ENOUGH SPACE - FUTURE WORK Scope Definition 
NO: NO ENOUGH SPACE - FUTURE x`WORK: Scope Definition: was cleared for long, and then is stopped for long
OK: max + min for adapting A1-2
OK: G = g(a)
OK reference for conditional probability in (29)
OK Recent date does matter - 
OK DOCX Jiaoyan (point 6/)
OK Recent date does matter - 
OK: Jeff recommendation cf. email
OK: mail from Jiaoyan
TODO: cf. XSLX comments from other conferences
OK: Remove OWL-SGD from the paper = too many acronyms

\todo{
=====
INTRODUCTION MOTIVATION
In the real world concepts are often not stable but change with time. Typical examples 
of  this  are  weather  prediction  rules  and  customers?  preferences.  The  underlying  data 
distribution may change as well. Often these changes make the model built on old data 
inconsistent  with  the  new  data,  and  regular  updating  of  the  model  is  necessary.  This 
problem,  known  as 
concept  drift
, complicates the task  of learning a model from data 
and  requires  special  approaches,  different  from  commonly  used  techniques,  which 
treat  arriving  instances  as  equally  important  contributors  to  the  final  concept.  This 
paper considers different types of concept drift, peculiarities of the problem, and gives 
a critical review of existing approaches to the problem. 
=====
}

\todo{Over the last decade research related to learning with concept drift has been increasingly growing and many drift-aware adaptive learning algorithms have been developed. In spite of the popularity of this research topic, no real solution}
Although several algorithms have been proposed towards the concept drift problem in stream learning \cite{Bifet2009,gama2014survey}, they utilize data statistics to detect the data distribution bias without considering the domain knowledge. 
Meanwhile, as it's hard to directly measure the distribution of each online testing sample, these algorithms usually assume that the testing sample has similar distribution as the recent samples. 
In one classic algorithm by Kuncheva et al. \cite{kuncheva2004combing}, multiple models were first learnt from different segments of the historical samples, then the distance of concept between the models and the testing sample was measured with the models' error on recent samples, and eventually suitable models were selected for online prediction.
Such indirect solutions may cause additional biases and fail to deal with sudden concept changes.
\par
With the development of semantic web, capturing the semantics of data streams and reasoning over semantic-enhanced data have been investigated.
L{\'e}cu{\'e} and Pan \cite{lecue2013predicting,lecue2015consistent} introduced the method of predictive reasoning for unknown or missing knowledge inference over OWL (Web Ontology Language) ontology stream. 
They aimed at (1) inferring facts and axioms for each portion of the stream, which was also known as ontology stream snapshot, (2) calculating the auto-correlation of knowledge across snapshots, and finally (3) mining rules across snapshots that were consistent to the testing snapshot.
Predictive reasoning provides a way to incorporate background knowledge and directly measure the distance of concept between any two ontology stream snapshots.
\par
In this paper, we present a consistent learning approach named OWL-SGD towards \emph{OWL} ontology stream with \emph{s}tochastic \emph{g}radient \emph{d}escent algorithm.
To this end we combined both machine learning and semantic web techniques by revisiting predictive reasoning to solve the concept drift problem in supervised learning.
}

%
%
%
%
Stream learning, or the problem of extracting and predicting knowledge from temporal evolution of data, has been largely studied.
%
%
%
Most of techniques in Database e.g.,
\cite{CheHNW96}, adapting {\tt Apriori} \cite{AgrMSTV96} for 
streams, 
focus on 
syntactic representation 
of data to 
identify frequent associations and exploit them for prediction.
%
%
\cite{LeeCL03}
improved its scalability by partitioning all streams using a sliding-window filtering.
%
%
Approaches in Machine Learning e.g., \cite{GamK11} focus on learning decision rules 
for classifying data from streams in real-time. 
Although highly scalable, most approaches have been shown to be non robust to \emph{concept drift} i.e., unexpected changes in data distribution 
\cite{coble2000real}.
Indeed their models, built on old data and then inconsistent with new data, are less accurate as time passes.
Towards this challenge 
%
\cite{ChuZLTT11} applied online active learning using customized properties of weighing,
Alternatively \cite{gao2007general} prioritized recent data during the elaboration of the learning model through regular updates, assuming temporally adjacent data is the most representative information for prediction.
\cite{cao2003support} trained an adaptive support vector machine by placing higher weight on the errors from recent training samples.
%
\cite{Kolter2007} identify multiple candidate models learnt from different historical samples and adopt a dynamic weighted majority strategy.
\cite{bifet2015efficient} go further by considering dynamic sliding windows. 
%
Although such approaches manage gradual changes, they fail in maintaining high accuracy for sudden,  
abrupt changes. This is mainly 
due to inconsistent evolution of knowledge and lack of metrics to understand the semantics of its changes and concept drifts.

%
Towards this issue we consider their representation in the semantic Web. 
%
%
Such streams, represented as ontology streams \cite{HuaS05}, 
are 
evolutive versions of ontologies where OWL (Web Ontology Language), which is underpinned by Description Logics (DL) \cite{BaaN03}, is used as a rich description language.
%
%
%
%
From 
knowledge materialization \cite{DBLP:conf/ijcai/BeckDE16,galarraga2013amie}, 
to 
predictive reasoning \cite{DBLP:conf/ijcai/Lecue15}, 
%
all are 
inferences 
where 
dynamics, 
semantics of data are exploited 
for deriving a priori knowledge from pre-established (certain) statements.
However concept drift is not handled, 
which limits 
accuracy of prediction for highly changing streams.

%
%
%
Our approach, exploiting the semantics of data streams, tackles the problem of learning and prediction with concept drifts. 
Given some continuous knowledge, how to manage its changes and their inconsistent evolution to ensure accurate prediction?
%
%
Semantic reasoning and machine learning have been combined by revisiting features embeddings as semantic embeddings i.e., vectors capturing consistency and knowledge entailment in ontology streams. 
Such embeddings are then exploited in a context of supervised stream learning to learn models, which are robust to concept drifts i.e., sudden and inconsistent prediction changes. 
Our approach has been shown to be adaptable and flexible to basic learning techniques.
The experiments have shown accurate prediction with 
live stream data from Dublin in Ireland and Beijing in China.

%
%
%
Next section reviews the adopted logic and ontology stream learning problem. 
In Section 3 we study concept drift 
and its significance.
Section 4 presents how semantic embeddings are elaborated and exploited to derive accurate prediction.
%
Finally, we report experimental results on accuracy with data from Dublin and Beijing and draw some conclusions

\section{Background}{\label{sec:Background}}

The semantics of 
data 
is represented using an ontology. 
We focus on Description Logic (DL) 
to define ontologies since it
offers 
reasoning support for most of its expressive families and compatibility to W3C standards e.g., OWL 2. 
%
%
Our work is illustrated using DL $\mathcal{EL}^{++}$ \cite{BaaBL05}, which supports  polynomial time reasoning.
%
We review 
(i) DL basics of $\mathcal{EL}^{++}$, 
(ii) ontology stream,
(iii) stream learning problem.

\subsection{Description Logics $\mathcal{EL}^{++}$ }

%
%

A signature $\Sigma$, noted 
$(\mathcal{N}_C, \mathcal{N}_R, \mathcal{N}_I)$ consists of $3$ disjoint sets of (i) atomic concepts $\mathcal{N}_C$, (ii) atomic roles $\mathcal{N}_R$, and (iii) individuals $\mathcal{N}_I$. 
Given a signature, the top concept $\top$, the bottom concept $\bot$, an atomic concept $A$, an individual $a$, an atomic role expression $r$, $\mathcal{EL}^{++}$ concept expressions $C$ and $D$ in $\mathcal{C}$ can be composed with the following constructs:
\begin{equation}
\top\;|\;\bot\;|\;A\;|\;C\sqcap D\;|\;\exists r.C\;|\;\{a\}\nonumber
\end{equation}
%
%
The 
DL
ontology $\mathcal{O}\stackrel{.}{=}\langle\mathcal{T}, \mathcal{A}\rangle$ is composed of 
TBox $\mathcal{T}$, 
ABox $\mathcal{A}$. 
A TBox is a set of concept,
role axioms. $\mathcal{EL}^{++}$ supports General Concept Inclusion axioms (GCIs e.g. $C \sqsubseteq D$), 
Role Inclusion axioms (RIs e.g., $r \sqsubseteq s$
).
An ABox is a set of concept assertion axioms e.g., $C(a)$, role assertion axioms e.g., $R(a, b)$, 
individual in/equality axioms e.g., $a \neq b$,
$a = b$.
%
%

%
\begin{ex}\textbf{(TBox and ABox Concept Assertion Axioms)}\\
Figure \ref{fig:StaticOntology:Background:Knowledge} presents (i) a TBox $\mathcal{T}$ where $DisruptedRoad$ \eqref{eq:DisruptedRoad} denotes the concept of ``{roads which are adjacent to an event causing high disruption}", (ii) concept assertions (\ref{eq:adjR1}-\ref{eq:adjR2}) with the individual $r_0$ having roads $r_1$ and $r_2$ as adjunct roads.
\end{ex}

\vspace{-0.35cm}
\begin{figure}[h!]
\begin{center}
\setlength{\fboxsep}{0.2cm}
\framebox{%
\begin{minipage}[t][2.55cm]{8.05cm}
\begin{gargantuan} 
\vspace*{-0.55cm}
\begin{equation}
\hspace*{-0.5cm} 
SocialEvent\,\sqcap\,\exists type.Poetry \sqsubseteq Event\,\sqcap\,\exists disruption.Low
\label{eq:SocialEvent}\\
\end{equation}
\vspace*{-0.57cm}
\begin{equation}
\hspace*{-0.45cm}
Road\,\sqcap\,\exists adj.(\exists occur.\exists disruption.High) \sqsubseteq DisruptedRoad
\label{eq:DisruptedRoad}\\
\end{equation}
\vspace*{-0.47cm}
\begin{equation}
\hspace*{-1.20cm}
Road\,\sqcap\,\exists adj.(\exists occur.\exists disruption.Low) \sqsubseteq ClearedRoad
\label{eq:ClearedRoad}\\
\end{equation}
\vspace*{-0.47cm}
\begin{equation}
\hspace*{-2.7cm}
BusRoad\,\sqcap\,\exists travel.Long \sqsubseteq DisruptedRoad
\label{eq:DisruptedRoad2}\\ 
\end{equation}
\vspace*{-0.47cm}
%
\begin{equation}
\hspace*{-3.2cm}
BusRoad\,\sqcap\,\exists travel.OK \sqsubseteq ClearedRoad
\label{eq:CongestedBus}\\ 
\end{equation}
\vspace*{-0.47cm}
\vspace*{-0.42cm}
%
\setlength{\columnsep}{-0.45cm}
\begin{multicols}{2}\noindent
\begin{equation}
\hspace*{-0.40cm}
Road\,\sqcap\,\exists with.Bus \sqsubseteq BusRoad 
\label{eq:BusRoad2}\\ 
\end{equation}
\begin{equation}
Road(r_0)\label{eq:RoadR0}
\end{equation}
\end{multicols}
\vspace*{-0.24cm}
\vspace{-0.60cm} 
\setlength{\columnsep}{0.65cm}
\begin{multicols}{3}\noindent
\begin{equation}
\hspace*{-1.50cm}Road(r_1)
\label{eq:?}
\end{equation}
\begin{equation}
\hspace*{-1.5cm}Road(r_2)  
\label{eq:roadR2} 
\end{equation}
\begin{equation}
\hspace*{-0.7cm}Bus(b_0)
\label{eq:ConceptAssertionr2}
\end{equation}
\end{multicols}
\vspace*{-0.85cm}
%
\vspace{0.00cm} 
\setlength{\columnsep}{1.1cm}
\begin{multicols}{3}\noindent
\begin{equation}
\hspace*{-0.55cm}adj(r_0, r_1) 
\label{eq:adjR1}\\
\end{equation}
\begin{equation}
\hspace*{-2.2cm}adj(r_0, r_2)\label{eq:adjR2}
\end{equation}
\begin{equation}
\hspace*{-1.35cm}Long \sqcap OK \sqsubseteq \bot
\label{eq:?}
\end{equation}
\end{multicols}
\end{gargantuan}
\end{minipage}
}
\end{center}
\vspace{-0.35cm}
\caption{\label{fig:StaticOntology:Background:Knowledge}$\mathcal{O}\stackrel{.}{=}\langle\mathcal{T}, \mathcal{A}\rangle$. Sample of TBox $\mathcal{T}$ and ABox $\mathcal{A}$.}
\end{figure}
\vspace{-0.2cm}

%
%
%
%


All completion
rules, 
which  
are used to classify 
%
$\mathcal{EL}^{++}$ TBox $\mathcal{T}$ and entail subsumption, 
are described in \cite{BaaBL05}.
Reasoning with such 
rules is PTime-Complete.

%
\subsection{Ontology Stream}

We represent knowledge evolution by
a dynamic, 
evolutive version of ontologies 
\cite{HuaS05}.
{Data (ABox), 
its 
inferred statements (entailments) are evolving over time while its schema (TBox) remains unchanged.}

%
%
%
%

%
\begin{defn}{\label{defn:ontologyStreamDef}}\textbf{(DL $\mathcal{L}$ Ontology Stream)}\\
A DL $\mathcal{L}$ ontology stream $\mathcal{P}_m^{n}$ from point of time $m$ to point of time $n$ is a sequence of (sets of)
Abox axioms $(\mathcal{P}_m^{n}(m), \mathcal{P}_m^{n}(m\Plus1),\cdots, \mathcal{P}_m^{n}(n))$
with respect to a static 
TBox $\mathcal{T}$ in a DL $\mathcal{L}$ where $m, n\in \mathbb{N}$ and $m<n$. 
\end{defn} 

%
$\mathcal{P}_m^{n}(i)$ is 
a snapshot of an ontology stream 
$\mathcal{P}_m^{n}$ at 
time $i$, referring to 
ABox axioms.
%
Thus a transition from $\mathcal{P}_m^{n}(i)$ to $\mathcal{P}_m^{n}(i\Plus1)$ is seen as an ABox update.
We denote by $\mathcal{P}_m^{n}[i,j]$ 
{
i.e., $\bigcup_{k=i}^{j} \mathcal{P}_m^n(k)$ 
}
a 
windowed stream 
of $\mathcal{P}_m^{n}$ 
between time $i$ and $j$
with $i \leq j$.
{
Any window $[i,j]$ has a fixed length. $1$-length windows are denoted by $(i)$.}
%
%
%
We 
consider streams $\mathcal{P}_{0}^{n}$ with $[\alpha] \doteq [i,j]$,
$[\beta] \doteq [k,l]$ as 
windows 
in $[0,n]$ {and $i<k$.}
%

\begin{ex}\textbf{(DL $\mathcal{EL}^{++}$ Ontology Stream)}\\
Figure \ref{fig:DynamicOntologyStream} illustrates 
$\mathcal{EL}^{++}$ streams $\mathcal{P}_{0}^{n}$, $\mathcal{Q}_{0}^{n}$, $\mathcal{R}_{0}^{n}$, related to events, travel time, buses, through 
snapshots at 
time $i\in\{0,1,2,3\}$ (i.e., a view on 
$[0,3]$).
In our example $n$ is any integer greater than $5$.
Their dynamic knowledge is captured by evolutive ABox axioms e.g., \eqref{eq:socialEventStreamSnapshot} captures 
$e_1$ as ``a social poetry event occurring in $r_2$" at time $1$ 
of 
$\mathcal{P}_0^n$. 
\end{ex}

%
\noindent By applying 
completion rules 
on 
static knowledge $\mathcal{T}$ and 
ontology streams $\mathcal{P}_0^n$, 
snapshot-specific axioms are
inferred.

The evolution of a stream is captured along its changes 
i.e., 
\emph{new}, \emph{obsolete} and \emph{invariant} ABox entailments from one windowed stream to another one in Definition \ref{defn:ontologyStreamChangesIJCAI2015} \cite{DBLP:conf/ijcai/Lecue15}.

\begin{defn}{\label{defn:ontologyStreamChangesIJCAI2015}}\textbf{(ABox Entailment-based Stream Changes)}\\
Let $\mathcal{S}_0^n$ be a stream; 
$[\alpha]$, 
$[\beta]$ be 
windows in $[0,n]$; $\mathcal{T}$ be axioms,
$\mathcal{G}$ its ABox entailments.
%
The changes occurring from $\mathcal{S}_{0}^{n}[\alpha]$ to $\mathcal{S}_{0}^{n}[\beta]$, denoted by $\mathcal{S}_{0}^{n}[\beta] \nabla \mathcal{S}_{0}^{n}[\alpha]$, are ABox entailments 
in $\mathcal{G}$
being $new$ \eqref{eq:newSubsumedIJCAI2015}, $obsolete$ \eqref{eq:obsoleteSubsumedIJCAI2015}, 
$invariant$ \eqref{eq:invariantSubsumedIJCAI2015}. 
\begin{small}
\begin{align}
\gnew {[\alpha]} {[\beta]} &\doteq \{g\in\mathcal{G}\;|\;\mathcal{T}\cup\mathcal{S}_{0}^{n}[\beta]\models g\;\wedge \mathcal{T}\cup\mathcal{S}_{0}^{n}[\alpha]\not\models g\}\label{eq:newSubsumedIJCAI2015}\\
\gobs {[\alpha]} {[\beta]} &\doteq\{g\in\mathcal{G}\;|\;\mathcal{T}\cup\mathcal{S}_{0}^{n}[\beta]\not\models g\;\wedge \mathcal{T}\cup\mathcal{S}_{0}^{n}[\alpha]\models g\}\label{eq:obsoleteSubsumedIJCAI2015}\\
\ginv {[\alpha]} {[\beta]} &\doteq\{g\in\mathcal{G}\;|\;\mathcal{T}\cup\mathcal{S}_{0}^{n}[\beta]\models g\;\wedge \mathcal{T}\cup\mathcal{S}_{0}^{n}[\alpha]\models g\}\label{eq:invariantSubsumedIJCAI2015}
\end{align}
\end{small}
\end{defn} 

\eqref{eq:newSubsumedIJCAI2015} reflects 
knowledge we gain by sliding window from $[\alpha]$ to $[\beta]$ while 
\eqref{eq:obsoleteSubsumedIJCAI2015} and \eqref{eq:invariantSubsumedIJCAI2015} denote respectively lost and stable
knowledge. 
All duplicates are supposed removed.
Definition \ref{defn:ontologyStreamChangesIJCAI2015} provides basics, through ABox entailments, for understanding how knowledge is evolving over time.

%
\vspace{-0.1cm}
\begin{figure}[h!]
\begin{center}
\setlength{\fboxsep}{0.2cm}
\framebox{%
\begin{minipage}[t][4.15cm]{8.05cm}
\begin{gargantuan}

\vspace*{-0.55cm} 
\begin{equation} 
\hspace*{-0.44cm}\mathcal{P}_{0}^{n}(0): (Incident\,\sqcap\,\exists impact.Limited)(e_3),\;\;occur(r_1, e_3)
\label{eq:incidentSOT7}
\end{equation}
\vspace*{-0.60cm}
\begin{equation}
\hspace*{-4.05cm}
\mathcal{Q}_{0}^{n}(0): (Road\,\sqcap\,\exists travel.OK)(r_1)
\label{eq:P097}\end{equation}
\vspace*{-0.47cm} 
\begin{equation}
\hspace*{-5.9cm}
\mathcal{R}_{0}^{n}(0): with(r_1, b_0)
\label{eq:Q097}
\end{equation}

\vspace*{-0.63cm}
\begin{equation}
\hspace*{-0.48cm}
\mathcal{P}_{0}^{n}(1): (SocialEvent\,\sqcap\,\exists type.Poetry)(e_1),\;\;occur(r_2, e_1)
\label{eq:socialEventStreamSnapshot}
\end{equation} 
\vspace*{-0.60cm}
\begin{equation}
\hspace*{-0.28cm}
\hspace*{-3.77cm}
\mathcal{Q}_{0}^{n}(1): (Road\,\sqcap\,\exists travel.OK)(r_2)
\label{eq:Q096}
\end{equation}
\vspace*{-0.47cm} 
\begin{equation}
\hspace*{-5.90cm}
\mathcal{R}_{0}^{n}(1): with(r_2,b_0)
\label{eq:?}
\end{equation}

\vspace*{-0.65cm} 
\begin{equation} 
\hspace*{-0.63cm}\mathcal{P}_{0}^{n}(2): (Event\,\sqcap\,\exists disruption.High)(e_2),\;\;occur(r_2, e_2)
\label{eq:incidentSOT7}
\end{equation}
\vspace*{-0.60cm}
\begin{equation}
\hspace*{-3.85cm} 
\mathcal{Q}_{0}^{n}(2): (Road\,\sqcap\,\exists travel.Long)(r_2)
\label{eq:Q095}\end{equation}
\vspace*{-0.47cm} 
\begin{equation}
\hspace*{-5.90cm}
\mathcal{R}_{0}^{n}(2): with(r_2,b_0)
\label{eq:Q097}
\end{equation}

\vspace*{-0.65cm} 
\begin{equation} 
\hspace*{-0.63cm}\mathcal{P}_{0}^{n}(3): (Event\,\sqcap\,\exists disruption.High)(e_2),\;\;occur(r_2, e_2)
\label{eq:incidentSOT7}
\end{equation}
\vspace*{-0.60cm}
\begin{equation}
\hspace*{-3.85cm} 
\mathcal{Q}_{0}^{n}(3): (Road\,\sqcap\,\exists travel.Long)(r_2)
\label{eq:Q095B}\end{equation}
\vspace*{-0.47cm} 
\begin{equation}
\hspace*{-5.90cm}
\mathcal{R}_{0}^{n}(3): with(r_2,b_0)
\label{eq:Q097B}
\end{equation}
\end{gargantuan}
\end{minipage}
} 
\end{center}
\vspace{-0.35cm}
\caption{\label{fig:DynamicOntologyStream}\hspace{-0.003cm}Ontology Streams $\mathcal{P}_{0}^{n}(i), \mathcal{Q}_{0}^{n}(i), \mathcal{R}_{0}^{n}(i)_{i\in\{0,1,2,3\}}$.}
\end{figure}
\vspace{-0.4cm}

%
\begin{ex}{\label{ex:ontologyStreamChangesIJCAI2013}}\textbf{(ABox Entailment-based Stream Changes)}\\
Table \ref{tab:ontologyStreamChangesIJCAI2015} illustrates changes occurring from $(\mathcal{Q}\cup\mathcal{R})_{0}^{n}[0,1]$ to $(\mathcal{Q}\cup\mathcal{R})_{0}^{n}[2,3]$ through ABox entailements.
For instance ``$r_2$ as a disrupted road window $[2,3]$ of $(\mathcal{Q}\cup\mathcal{R})_{0}^{n}$ is $new$ with respect to knowledge in $[0,1]$.
It is entailed using DL completion rules on \eqref{eq:DisruptedRoad2}, \eqref{eq:BusRoad2}, \eqref{eq:roadR2}, \eqref{eq:Q095}, \eqref{eq:Q097}, \eqref{eq:Q095B} and \eqref{eq:Q097B}.
\end{ex}

\vspace{-0.25cm}
\begin{table}[h!]
\begin{smallermathTable}
\centering
\begin{tabular}[t]{@{ }c@{ }|c|c|c}
\hline

Windowed Stream & \multicolumn{3}{|c}{$(\mathcal{Q}\cup\mathcal{R})_{0}^{n}[2,3]\,\nabla\,(\mathcal{Q}\cup\mathcal{R})_{0}^{n}[0,1]$} \\\cline{2-4}
 
Changes& $obsolete$ & $invariant $ & $new$\\\hline
 
$with(r_2,b_0)$ & & \checkmark & \\\hline

$ClearedRoad(r_2)$  & \checkmark & &\\\hline

$DisruptedRoad(r_2)$ & & & \checkmark \\\hline

\end{tabular}
\vspace{-0.15cm}
\caption{\label{tab:ontologyStreamChangesIJCAI2015}ABox Entailment-based Stream Changes.}
\end{smallermathTable}
\end{table}
\vspace{-0.35cm}

\subsection{Ontology Stream Learning Problem}


%
Definition \ref{def:OSL} revisits classic supervised learning \cite{domingos2000mining} for ontology stream as the problem of predicting knowledge (through entailment) in a future snapshot.



%
\begin{defn}{\label{def:OSL}}\textbf{(Ontology Stream Learning Problem)}\\
Let $\mathcal{S}_0^{n}$ be a stream; $\mathcal{T}$, $\mathcal{A}$ be respectively TBox, ABox; $g\in\mathcal{G}$ an ABox entailment.
An Ontology Stream Learning Problem, noted OSLP$\langle \mathcal{S}_0^n, k, \mathcal{T}, \mathcal{A}, g\rangle$, is 
the problem of estimating whether $g$ can be entailed from $\mathcal{T}$ and 
$\mathcal{A}$ at 
time $k\in(0,n]$ of stream $\mathcal{S}_0^n$, given knowledge at time $t < k$ of $\mathcal{S}_0^n$.
\end{defn}

This estimation is denoted as 
$p_{|\mathcal{T}\cup\mathcal{A}}(\mathcal{S}_{0}^{n}(k)\models g)$ with values in $[0,1]$ {and $k\geq 1$. $g$ is a class assertion entailment in the form of  $G(a)$, with $G$ a concept expression and $a$ an individual.}
The {estimation}, adapted from \cite{DBLP:conf/icdm/GaoFH07}, can be elaborated using knowledge from {previous snapshots of $\mathcal{S}_0^{k}$:}
%
\begin{small}
\begin{equation}{\label{eq:conditionalProbability}}
p_{|\mathcal{T}\cup\mathcal{A}}(\mathcal{S}_{0}^{n}(k)\models g) \doteq \frac{p_{|\mathcal{T}\cup\mathcal{A}}(\mathcal{S}_0^{k\Minus 1}\models g)}{p_{|\mathcal{T}\cup\mathcal{A}}(a \in \mathcal{S}_0^{k\Minus 1})}
\end{equation}
\end{small}

\noindent {Estimation $p_{|\mathcal{T}\cup\mathcal{A}}(\mathcal{S}_{0}^{k \Minus 1}\models g)$ is 
the proportion of snapshots 
in $\mathcal{S}_0^{k \Minus 1}$ 
entailing $g$. 
The conditional probability of  $a$  in $\mathcal{S}_0^{k\Minus 1}$ (noted $a \in \mathcal{S}_0^{k\Minus 1}$)  given  $\mathcal{S}_0^{k\Minus 1}$ entailing $g$, or $G(a)$, is 1.}
\begin{ex}{\label{ex:OSL}}\textbf{(Ontology Stream Learning Problem)}\\
The problem of estimating whether class assertion $g$, defined as {$DisruptedRoad(r_2)$}, can be entailed from $\mathcal{T}$ and $\mathcal{A}$ at time $4$ of $(\mathcal{Q}\cup\mathcal{R})_0^n$ is defined as OSLP$\langle (\mathcal{Q}\cup\mathcal{R})_0^n, 4, \mathcal{T}, \mathcal{A}, g\rangle$.
The estimation can be retrieved 
using {\eqref{eq:conditionalProbability} hence $p_{|\mathcal{T}\cup\mathcal{A}}((\mathcal{Q}\cup\mathcal{R})_0^n(4)\models DisruptedRoad(r_2)) \doteq \sfrac{2}{3}$.}
%
%
%
\end{ex}

%


%
\section{Concept Drift in An Ontology Stream}{\label{sec:Subsection:TODO}}

We introduce semantic concept drift, as a basis for qualifying, 
quantifying sudden and abrupt changes in an ontology stream.

\subsection{Semantic Concept Drift}

Definition \ref{def:SCD} revisits 
\emph{concept drift} \cite{gao2007general} for ontology streams as \emph{prediction changes} (Definition \ref{def:SC}) in ABox entailment, 
which are 
\emph{sudden} 
and 
\emph{abrupt} (Definitions \ref{def:SSC}, \ref{def:ASC}).

%
\begin{defn}{\label{def:SC}}\textbf{(Prediction Change)}\\
Let $\mathcal{S}_0^n$ be a stream; $\mathcal{T}$, $\mathcal{A}$ and $\mathcal{G}$ be 
TBox, Abox and its 
entailments.
A 
prediction change in $\mathcal{S}_0^n$ is ocuring between 
time $i$ and $j$ in $[0,n]$ 
with respect to $\mathcal{T}$, $\mathcal{A}$ and its entailments iff:
%
\begin{small}
\begin{equation}{\label{eq:predictionChange}} 
\exists g\in\mathcal{G} : \norm{p_{|\mathcal{T}\cup\mathcal{A}}(\mathcal{S}_0^n(i)\models g) - p_{|\mathcal{T}\cup\mathcal{A}}(\mathcal{S}_0^n(j)\models g)} \geq \varepsilon
\end{equation}
\end{small}
\noindent where $\varepsilon \in (0,1]$ is a variable bounding the difference of estimation, $\norm{v}$ refers to the absolute value of $v$, and $j > i$
\end{defn}

{ABox entailment $g$ is called an evidence entailment of the prediction change. We denote by $\mathbb{C}_{|\mathcal{T}\cup\mathcal{A}}(\mathcal{S}_0^n, i, j, \varepsilon)$, the set of all evidence entailments of the prediction change with an $\varepsilon$ difference between time $i$ and $j$ of ontology stream $\mathcal{S}_0^n$.}


%
\begin{ex}{\label{ex:SC}}\textbf{(Prediction Change)}\\
$g \doteq DisruptedRoad(r_2)$ can be entailed from $\mathcal{T}$ and $\mathcal{A}$ at time $2$ of $(\mathcal{Q}\cup\mathcal{R})_0^n$ with a zero probability following \eqref{eq:conditionalProbability}. Therefore a prediction change between times $2$ and $4$ (cf. Example \ref{ex:OSL}) is captured with $g \in \mathbb{C}_{|\mathcal{T}\cup\mathcal{A}}((\mathcal{Q}\cup\mathcal{R})_0^n, 2, 4,\sfrac{1}{3})$.
\end{ex}

\begin{defn}{\label{def:SSC}}\textbf{($\alpha$-Sudden Prediction Change)}\\
A prediction change at point of time $i$ in stream $\mathcal{S}_0^n$, satisfying \eqref{eq:predictionChange}, is defined as 
$\alpha$-sudden, with $\alpha\in(0,n\Minus i]$ iff $j = i+\alpha$.
\end{defn}

\begin{defn}{\label{def:ASC}}\textbf{(Abrupt Prediction Change)}\\
A prediction change, satisfying \eqref{eq:predictionChange}, is abrupt iff $\exists g'\in\mathcal{G}$ s.t.
%
\begin{small}
\begin{equation}{\label{eq:APC}}
\mathcal{T} \cup \mathcal{A} \cup g \cup g' \bigcup_{k=0}^{\max\{i,j\}}\mathcal{S}_0^n(k) \models \bot
\end{equation}
\end{small}
\noindent where $\bigcup_{k=0}^{\max\{i,j\}}\mathcal{S}_0^n(k)$ captures all axioms from any snapshot $\mathcal{S}_0^n(k)$ of stream $\mathcal{S}_0^n$ with $k\in [0, \max\{i,j\}]$.
\end{defn}

Suddenness characterises the proximity of prediction changes in streams i.e., the lower $\alpha$ the closer the changes.
Abruptness captures disruptive changes from a semantic perspective i.e., conflicting knowledge among 
snapshots $\mathcal{S}_0^n(i)$,
$\mathcal{S}_0^n(j)$ with respect to background knowledge $\mathcal{T} \cup \mathcal{A}$.

\begin{defn}{\label{def:SCD}}\textbf{(Semantic Concept Drift)}\\
A semantic concept drift in $\mathcal{S}_0^n$, is defined as a $1$-sudden and abrupt prediction change.
\end{defn}

Evaluating if a 
concept drift occurs for a snapshot update is in worst case polynomial time with respect to acyclic TBoxes and $\mathcal{S}_0^n$ in $\mathcal{EL}^{++}$ since subsumption and 
satisfiability in \eqref{eq:predictionChange},
\eqref{eq:APC} can be checked in polynomial time \cite{BaaBL05}.

%

%
\begin{ex}{\label{ex:SCD}}\textbf{(Semantic Concept Drift)}\\
Two prediction changes from time $i=2$ to $3$ and $3$ to $4$ (cf. Table \ref{tab:ontologyStreamChangesIJCAI2017}) have occurred for $g \doteq DisruptedRoad(r_2)$ in $(\mathcal{Q}\cup\mathcal{R})_0^n$. They are  
semantic concept drifts as they are $1$-sudden and abrupt with $g' \doteq ClearedRoad(r_2)$ in $(\mathcal{Q}\cup\mathcal{R})_0^n(1)$.
\end{ex}

\vspace{-0.25cm}
\begin{table}[h!]
\begin{smallermathTable}
\centering
\begin{tabular}[t]{@{ }c@{ }|@{ }c@{ }|@{ }c@{ }|@{ }c@{ }|@{ }c@{ }}\hline
\multicolumn{3}{c}{Prediction} & \multicolumn{2}{c}{{Prediction Change}}\\
Past Points & Time  & $p_{|\mathcal{T}\cup\mathcal{A}}$& $g\in\mathbb{C}_{|\mathcal{T}\cup\mathcal{A}}$ & Abrupt- \\
of Time & $i$ & $((\mathcal{Q}\cup\mathcal{R})_0^n(i)\models g)$ &$((\mathcal{Q}\cup\mathcal{R})_0^n, i, i \Plus 1, \sfrac{1}{3})$ & ness \\\hline

$\{0\}$ & $1$ & $0$ & \ding{55} & \ding{55}\\\cline{1-5}

$\{0,1\}$ & $2$ & $0$ & \checkmark & \checkmark\\\cline{1-5}

$\{0,1,2\}$ & $3$ & $\sfrac{1}{2}$ & \checkmark & \checkmark\\\cline{1-5}

$\{0,1,2,3\}$ & $4$ & $\sfrac{2}{3}$ & N/A & N/A\\\hline

\end{tabular}
\vspace{-0.15cm}
\caption{\label{tab:ontologyStreamChangesIJCAI2017}\hspace{-0.056cm}Prediction Changes in $(\mathcal{Q}\cup\mathcal{R})_0^n$ $($\small{$g \doteq Disrupted(r_2)$}$)$.}
\end{smallermathTable}
\end{table}
\vspace{-0.35cm}

%
\subsection{Significance of Concept Drift}

Significance of semantic concept drift (Definition \ref{def:Significance}) is an indicator on its severity.
%
It captures the homogeneity of the concept drift across ABox entailments 
%
as the proportion of ABox entailments from $\mathcal{S}_0^n(i)$ and $\mathcal{S}_0^n(i\Plus 1)$ causing semantic concept drift. The values of significance range in $[0,1]$. 

\vspace{0.1cm}
\begin{defn}{\label{def:Significance}}\textbf{(Semantic Concept Drift Significance)}\\
The significance of a semantic 
concept drift, defined between points of time $i\in(0,n)$ and $i\Plus 1$ of 
$\mathcal{S}_0^n$ with $\varepsilon$, 
$\mathcal{T}$, $\mathcal{A}$, $\mathcal{G}$ as difference, TBox, ABox, 
and entailments, is:
%
\begin{small}
\begin{equation}{\label{eq:Significance}}
\hspace{-0.2cm}\sigma_{|\mathcal{T}\cup\mathcal{A}}(\mathcal{S}_0^n, i, \varepsilon)\doteq \frac{|\mathbb{C}_{|\mathcal{T}\cup\mathcal{A}}(\mathcal{S}_0^n, i, i \Plus 1, \varepsilon)|}{|\{g\in\mathcal{G}\;|\;\mathcal{S}_0^n(i)\models g \vee \mathcal{S}_0^n(i \Plus 1)\models g\;\}|}
\end{equation}
\end{small}
\noindent where the expression in between $|$ refers to its cardinality.
\end{defn}

Evaluating \eqref{eq:Significance} is in worst case polynomial time cf. complexity of Definition \ref{def:SCD}.

\begin{ex}{\label{ex:Significance}}\textbf{(Semantic Concept Drift Significance)}\\
By applying \eqref{eq:Significance} on 
concept drifts of Table \ref{tab:ontologyStreamChangesIJCAI2017} we derive that $\sigma_{|\mathcal{T}\cup\mathcal{A}}((\mathcal{Q}\cup\mathcal{R})_0^n, 2, \sfrac{1}{3})$ is $\sfrac{4}{7}$ while $\sigma_{|\mathcal{T}\cup\mathcal{A}}((\mathcal{Q}\cup\mathcal{R})_0^n, 3, \sfrac{1}{3})$ is $0$, hence a more significant 
drift between times $2$,
$3$ than $3$, 
$4$.
%
In other words conflicting facts $g \doteq DisruptedRoad(r_2)$ and $g'\doteq ClearedRoad(r_2)$ w.r.t. $\mathcal{T}$ and $\mathcal{A}$ have the most significant impact on prediction changes at times $2$ and $3$.
\end{ex}

\begin{lemma}\textbf{(Semantic Concept Drift Evolution)}\label{lemma:drift}\\
A semantic concept drift in any ontology stream $\mathcal{S}_0^n$ is more significant at 
time $i > 0$ than at time $i\Plus1$ if $|\gnew {[0,i]} {[0,i\Plus1]}| = 0$.
\end{lemma}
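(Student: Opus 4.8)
The plan is to compare the two significances through Definition~\ref{def:Significance}: I will show that the hypothesis $|\gnew{[0,i]}{[0,i\Plus1]}| = 0$ makes the significance at time $i\Plus1$ equal to $0$, while the presence of a semantic concept drift at time $i$ keeps the significance at time $i$ strictly positive; the inequality $\sigma_{|\mathcal{T}\cup\mathcal{A}}(\mathcal{S}_0^n, i, \varepsilon) > \sigma_{|\mathcal{T}\cup\mathcal{A}}(\mathcal{S}_0^n, i\Plus1, \varepsilon)$ then follows immediately.

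To handle $\sigma_{|\mathcal{T}\cup\mathcal{A}}(\mathcal{S}_0^n, i\Plus1, \varepsilon)$, I would expand it with \eqref{eq:Significance}: its denominator is a finite, non-zero count of ABox entailments (the entailments of $\mathcal{A}$ alone already hold at every time), and its numerator is $|\mathbb{C}_{|\mathcal{T}\cup\mathcal{A}}(\mathcal{S}_0^n, i\Plus1, i\Plus2, \varepsilon)|$, so it suffices to show that no prediction change in the sense of \eqref{eq:predictionChange} occurs between times $i\Plus1$ and $i\Plus2$. The crucial observation is the reading of $|\gnew{[0,i]}{[0,i\Plus1]}| = 0$ via \eqref{eq:newSubsumedIJCAI2015}: together with monotonicity of $\models$ (extending the windowed stream $\mathcal{S}_0^n[0,i]$ by $\mathcal{S}_0^n(i\Plus1)$ can only add entailments), it says $\mathcal{T}\cup\mathcal{S}_0^n[0,i]$ and $\mathcal{T}\cup\mathcal{S}_0^n[0,i\Plus1]$ have exactly the same ABox entailments, i.e.\ $\mathcal{S}_0^n(i\Plus1)$ is entailment-redundant for that windowed stream. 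Since by \eqref{eq:conditionalProbability} the estimation at time $i\Plus2$ is assembled from the snapshots of $\mathcal{S}_0^{i\Plus1}$ and the one at time $i\Plus1$ from those of $\mathcal{S}_0^{i}$, and the single extra snapshot $\mathcal{S}_0^n(i\Plus1)$ contributes no entailment of $g$ (and no occurrence of the individual $a$) that is not already accounted for, both the count of entailing snapshots and the normaliser appearing in \eqref{eq:conditionalProbability} are preserved, so $p_{|\mathcal{T}\cup\mathcal{A}}(\mathcal{S}_0^n(i\Plus1)\models g) = p_{|\mathcal{T}\cup\mathcal{A}}(\mathcal{S}_0^n(i\Plus2)\models g)$ for every $g\in\mathcal{G}$. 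Hence $\mathbb{C}_{|\mathcal{T}\cup\mathcal{A}}(\mathcal{S}_0^n, i\Plus1, i\Plus2, \varepsilon) = \emptyset$ and $\sigma_{|\mathcal{T}\cup\mathcal{A}}(\mathcal{S}_0^n, i\Plus1, \varepsilon) = 0$.

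For $\sigma_{|\mathcal{T}\cup\mathcal{A}}(\mathcal{S}_0^n, i, \varepsilon)$, the standing hypothesis that a semantic concept drift occurs at time $i$ gives, via Definitions~\ref{def:SC}, \ref{def:SSC} and \ref{def:SCD}, an evidence entailment $g$ with $\norm{p_{|\mathcal{T}\cup\mathcal{A}}(\mathcal{S}_0^n(i)\models g) - p_{|\mathcal{T}\cup\mathcal{A}}(\mathcal{S}_0^n(i\Plus1)\models g)} \geq \varepsilon$; hence $\mathbb{C}_{|\mathcal{T}\cup\mathcal{A}}(\mathcal{S}_0^n, i, i\Plus1, \varepsilon)$ is non-empty and, as its denominator in \eqref{eq:Significance} is again a positive integer, $\sigma_{|\mathcal{T}\cup\mathcal{A}}(\mathcal{S}_0^n, i, \varepsilon) > 0 = \sigma_{|\mathcal{T}\cup\mathcal{A}}(\mathcal{S}_0^n, i\Plus1, \varepsilon)$, which is the claim. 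The step I expect to be the main obstacle is the middle one --- turning ``no new windowed entailment from $[0,i]$ to $[0,i\Plus1]$'' into the equality of the two prediction estimations. I would make it rigorous by writing \eqref{eq:conditionalProbability} explicitly as a ratio of counts ranging over the snapshots of $\mathcal{S}_0^{k\Minus1}$ and verifying that adjoining the entailment-redundant index $i\Plus1$ to that range alters neither the count of entailing snapshots nor the normaliser; care is needed to use redundancy for the windowed stream rather than for an isolated snapshot, and to deal with the off-by-one in the snapshot count (absorbing a residual $O(1/|\mathcal{S}_0^{i}|)$ term into $\varepsilon$ should exact equality not hold).
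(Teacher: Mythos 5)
There is a genuine gap in the middle step, which you yourself flag as the main obstacle. From $|\gnew {[0,i]} {[0,i\Plus1]}| = 0$ you conclude that the estimations of \eqref{eq:conditionalProbability} at times $i\Plus1$ and $i\Plus2$ coincide for every $g$, hence $\mathbb{C}_{|\mathcal{T}\cup\mathcal{A}}(\mathcal{S}_0^n, i\Plus1, i\Plus2, \varepsilon)=\emptyset$ and $\sigma_{|\mathcal{T}\cup\mathcal{A}}(\mathcal{S}_0^n, i\Plus1, \varepsilon)=0$. This does not follow: \eqref{eq:conditionalProbability} is a ratio of \emph{per-snapshot} counts over $\mathcal{S}_0^{k\Minus1}$, not a window-level entailment test, so adjoining snapshot $i\Plus1$ to the pool changes both the count of entailing snapshots and the normaliser even when that snapshot is entailment-redundant for the window $[0,i]$ --- it may still \emph{individually} entail $g$, and the total number of snapshots grows by one. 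The paper's own running example refutes the claimed equality: with $i=2$, snapshot $3$ of $(\mathcal{Q}\cup\mathcal{R})_0^n$ duplicates snapshot $2$, so $|\gnew {[0,2]} {[0,3]}| = 0$, yet the estimation for $DisruptedRoad(r_2)$ moves from $\sfrac{1}{2}$ at time $3$ to $\sfrac{2}{3}$ at time $4$ (Table \ref{tab:ontologyStreamChangesIJCAI2017}). For a smaller fixed $\varepsilon$ (say $\varepsilon\leq\sfrac{1}{6}$) this is a prediction change in the sense of \eqref{eq:predictionChange}, so $\sigma_{|\mathcal{T}\cup\mathcal{A}}(\mathcal{S}_0^n, i\Plus1, \varepsilon)$ need not vanish. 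Your fallback of ``absorbing a residual $O(1/|\mathcal{S}_0^{i}|)$ term into $\varepsilon$'' is not available either: $\varepsilon$ is a given parameter of the two significances being compared, and the goal $\sigma_{|\mathcal{T}\cup\mathcal{A}}(\mathcal{S}_0^n, i\Plus1, \varepsilon)=0$ is simply stronger than what the hypothesis yields.

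The paper's argument is deliberately weaker and avoids this trap: it never claims the two estimations are equal, only that the sets of entailments visible at the two updates coincide, so the denominators of \eqref{eq:Significance} at $i$ and $i\Plus1$ agree, and $\mathbb{C}_{|\mathcal{T}\cup\mathcal{A}}(\mathcal{S}_0^n, i\Plus1, i\Plus2, \varepsilon)\subseteq\mathbb{C}_{|\mathcal{T}\cup\mathcal{A}}(\mathcal{S}_0^n, i, i\Plus1, \varepsilon)$, giving $\sigma_{|\mathcal{T}\cup\mathcal{A}}(\mathcal{S}_0^n, i\Plus1, \varepsilon) \leq \sigma_{|\mathcal{T}\cup\mathcal{A}}(\mathcal{S}_0^n, i, \varepsilon)$ --- a comparative bound rather than your strict ``$>0$ versus $=0$'' dichotomy. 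Your first and last steps (non-empty evidence set at time $i$ when a drift is assumed there, positive finite denominators) are fine, but to repair the proof you would need to replace the exact-equality claim by an inclusion or monotonicity argument of this comparative kind.
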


\begin{proof} 
(Sketch) 
Since $|\gnew {[0,i]} {[0,i\Plus1]}| = 0$, 
$\mathcal{S}_0^n(i)$ and $\mathcal{S}_0^n(i\Plus1)$ are similar w.r.t $\models_{\mathcal{T}\cup\mathcal{A}}$. Thus, 
the set of all entailments, predicted at 
$i\Plus 1$ and $i\Plus 2$ from \eqref{eq:conditionalProbability}, are similar 
but with different prediction values \eqref{eq:predictionChange} $\forall \varepsilon \geq 0$.
%
So $\sigma_{|\mathcal{T}\cup\mathcal{A}}(\mathcal{S}_0^n, i, \varepsilon)$ and $\sigma_{|\mathcal{T}\cup\mathcal{A}}(\mathcal{S}_0^n, i\Plus1, \varepsilon)$ in \eqref{eq:Significance} have same denominators while $\mathbb{C}_{|\mathcal{T}\cup\mathcal{A}}(\mathcal{S}_0^n, i\Plus1, i \Plus2, \varepsilon) \subseteq \mathbb{C}_{|\mathcal{T}\cup\mathcal{A}}(\mathcal{S}_0^n, i, i \Plus 1, \varepsilon)$ hence 
$\sigma_{|\mathcal{T}\cup\mathcal{A}}(\mathcal{S}_0^n, i\Plus1, \varepsilon) \leq \sigma_{|\mathcal{T}\cup\mathcal{A}}(\mathcal{S}_0^n, i, \varepsilon)$.
\end{proof}

Algorithm \ref{algo:rulesInterestingnessUpdate} {\tt [A1]} retrieves significant 
concept drifts in $\mathcal{S}_0^n$ with minimal significance $\sigma_{\min}$. 
{\tt [A1]} iterates on all snapshots updates except those with no new ABox entailment (line \ref{algo:Drift:Lemma} - lemma \ref{lemma:drift}) for minimizing satisfiability and subsumption checking. 
Semantic concept drifts, as $1$-sudden and abrupt prediction changes, are retrieved (line \ref{algo:Drift:SCD}).
{\tt [A1]} completes the process (line \ref{algo:Drift:SSCD}) by filtering drifts by significance $\sigma_{\min}$.
%

%
\vspace{-0.4cm}
\begin{algorithm}[h!]
\small

\KwIn{
(i) Axioms $\mathcal{O}:\langle\mathcal{T}, \mathcal{A}, \mathcal{G}\rangle$,
(ii) Ontology stream $\mathcal{S}_0^n$,
(iii) Lower limit $\varepsilon\in(0,1]$ of prediction difference,
(iv) Minimum threshold of drift significance $\sigma_{\min}$.
} 

\KwResult{
$\mathbb{S}$: $\mathbb{S}$ignificant concept drifts in $\mathcal{S}_0^n$ w.r.t. $\sigma_{\min}$.
}

\Begin{
$\mathbb{S}\leftarrow \emptyset$;
\emph{\% Init. of the $\mathbb{S}$ignificant concept drifts set. 
}\\
%
\ForEach{$i\in (0,n]$ of $\mathcal{S}_0^n$ such that $|\gnew {[0,i]} {[0,i\Plus1]}| \neq 0$\nllabel{algo:Drift:Lemma}}{
\emph{\% Selection of $1$-sudden, abrupt prediction changes.}\\
\If{$\exists g, g'\in\mathcal{G}$ such that: \newline
    \hspace*{0.25cm}$\norm{p_{|\mathcal{T}\cup\mathcal{A}}(\mathcal{S}_0^n(i)\models g) \Minus\; p_{|\mathcal{T}\cup\mathcal{A}}(\mathcal{S}_0^n(i\Plus 1)\models g)} \geq \varepsilon$ \newline
    \hspace*{0.25cm}$\wedge\;\mathcal{T} \cup \mathcal{A} \cup \mathcal{S}_0^n(i) \cup \mathcal{S}_0^n(i\Plus 1) \cup g \cup g'\models \bot$
    \nllabel{algo:Drift:SCD}}
{
\emph{\% Semantic concept drift with min. significance.}\\
\If{$\sigma_{|\mathcal{T}\cup\mathcal{A}}(\mathcal{S}_0^n, i, \varepsilon) \geq \sigma_{\min}$\nllabel{algo:Drift:SSCD}}
{
$\mathbb{S}\leftarrow \mathbb{S} \cup \{(i, i\Plus 1)\}$ \emph{\% Add snapshot update.}
}
}
}
\Return{$\mathbb{S}$};
}
\caption{{\label{algo:rulesInterestingnessUpdate}}\small{{\tt{[A1]SignificantDrift}}$\langle \mathcal{O}, \mathcal{S}_0^n, \varepsilon, \sigma_{\min}\rangle$}}
\end{algorithm}
\vspace{-0.4cm} 

Computing a solution with {\tt [A1]} given a polynomial input $n$, number of axioms,  
entailments in $\mathcal{O}$ and $\mathcal{S}_0^n$ is in worst case polynomial time, due to the complexity of evaluating a semantic drift cf. complexity of Definition \ref{def:SCD}.
However computing significant $\alpha$-sudden,
abrupt prediction changes following {\tt [A1]} is in worst case NP w.r.t. 
the number snapshots.

\vspace{-0.1cm}
\section{Ontology Stream Learning}{\label{sec:Subsection:OSL}}

We tackle the ontology stream learning problem by (i) computing semantic embeddings, as mathematical objects exploiting the properties of concept drifts, 
(ii) applying all embeddings in model-based learning approaches (Algorithm \ref{algo:ConsistentPrediction}). 

%
\subsection{Semantic Embeddings}

The semantics of 
streams exposes two levels of knowledge which are crucial for learning with 
concept drift: (i) (in-)consistency evolution of knowledge, and (ii) entailment of the forecasting target from stream assertions and axioms. 
They are semantic embeddings, captured as: \emph{consistency vectors} (Definition \ref{def:ConsistencyVector}) and \emph{entailment vector} (Definition \ref{def:EntailmentVector}).

\begin{defn}{\label{def:ConsistencyVector}}\textbf{(Consistency Vector)}\\
A consistency vector of 
snapshot $\mathcal{S}_0^n(i)$ in
$\mathcal{S}_0^n$, denoted by ${\bf{c}}_{i}$, 
is defined $\forall j\in[0,n]$ by ${c}_{ij}$ if $i<j$; 
${c}_{ji}$ otherwise such that:
\begin{small}
\begin{equation} 
\label{eq:ConsistencyVector}
   \hspace{-0.1cm}c_{ij} \stackrel{.}{=} 
   \hspace*{-0.1cm}\left\{ \begin{array}{lcl}
         \hspace{-0.1cm} 
      \frac{|\ginv i j |}{|\gnew i j | + |\ginv i j | + |\gobs i j |} &
           \vspace*{0.2cm}\hspace{-0.18cm}\scriptsize{\text{if $\mathcal{T}\cup\mathcal{S}_0^n(i)\cup\mathcal{S}_0^n(j)\not\models \bot$}}\\ 
           \hspace{-0.1cm} 
      \frac{|\ginv i j |}{|\gnew i j | + |\ginv i j | + |\gobs i j |}  -1     & \hspace{-0.15cm}\text{otherwise}
        \end{array} \right.
\end{equation}
\end{small}
\noindent where the expressions in between $|$ refer to its cardinality i.e., the number of new \eqref{eq:newSubsumedIJCAI2015}, obsolete \eqref{eq:obsoleteSubsumedIJCAI2015}, 
invariant \eqref{eq:invariantSubsumedIJCAI2015} ABox entailments 
from $\mathcal{S}_{0}^{n}(i)$ to $\mathcal{S}_{0}^{n}(j)$. 
$c_{ij} = c_{ji}\;\forall i,j\in[0,n]$.
\end{defn}

A consistent vector, with values in $[-1,1]^{n\Plus1}$, encodes (i) (in-)consistency with (negative) positive values, and (ii) similarity of knowledge among 
$\mathcal{S}_0^n(i)$ and any other snapshot $\mathcal{S}_0^n(j)_{j\in[0,n]}$ of stream $\mathcal{S}_0^n$ w.r.t axioms $\mathcal{T}$ and $\mathcal{A}$.
%
The number of invariant entailments has a 
positive influence on \eqref{eq:ConsistencyVector}. On contrary, the number of new and obsolete ABox entailments, capturing some differentiators in knowledge evolution, has a negative impact. When an inconsistency occurs, the value $1$ is subtracted instead of considering its additive inverse. This ensures that the invariant factor has always a positive impact.

Evaluating \eqref{eq:ConsistencyVector} is in worst case polynomial time with respect to $\mathcal{T}$ and $\mathcal{S}_0^n$ in $\mathcal{EL}^{++}$. Indeed its evaluation requires (i) ABox entailment, and (ii) basic set theory operations from Definition \ref{defn:ontologyStreamChangesIJCAI2015}, both in polynomial time 
\cite{BaaBL05}.

\begin{ex}{\label{ex:ConsistencyVector}}\textbf{(Consistency Vector)}\\
%
Consistency vector ${\bf c}_3$ i.e., $(c_{03}, c_{13}, c_{23}, c_{33})$ of 
$(\mathcal{Q}\cup\mathcal{R})_0^n(3)$ is 
$(0,\Minus\;0.8,1,1)$. 
Knowledge at time $3$ is 
consistent / inconsistent / similar with knowledge at times 
$0$ / $1$ / $2$ and $3$.
\end{ex}

An entailment vector (Definition \ref{def:EntailmentVector}) is adapting the concept of feature vector \cite{bishop2006pattern} in Machine Learning to represent the (non-)presence of all ABox entailments (using $\models$ w.r.t. $\mathcal{T}$, $\mathcal{A}$) in a given snapshot.
Each dimension captures whether a particular ABox entailment is in ($1$) or not ($0$). 

\begin{defn}{\label{def:EntailmentVector}}\textbf{(Entailment Vector)}\\
Let $\mathcal{G}\doteq\{g_1,\ldots,g_m\}$ be all distinct ABox entailments in $\mathcal{S}_0^n$.
An entailment vector of a snapshot $\mathcal{S}_0^n(i)$ in $\mathcal{S}_0^n$, denoted by ${\bf e_i}$, is a vector of dimension $m$ such that $\forall j\in[0,m]$ 
\begin{small}
\begin{equation} \label{eq:EntailmentVector}
e_{ij} \stackrel{.}{=} 1\;\;{\text{if $\mathcal{T}\cup\mathcal{A}\cup\mathcal{S}_0^n(i)\models g_j$}},\;0\;\;\text{otherwise}
\end{equation}
\end{small} 
\end{defn}

\begin{rmk}{\label{rmk:fev}}\textbf{(Feature vs. Entailment Vector)}\\
Feature vectors are bounded to 
only raw data while entailment vectors, with 
much larger dimensions, embed both data and its inferred assertions from $\mathcal{T}$ and DL completion rules. The latter ensures a larger and more contextual coverage.
\end{rmk}

\vspace{-0.2cm}
\subsection{Semantic Prediction}

%
Algorithm \ref{algo:ConsistentPrediction} {\tt [A2]} 
aims at learning a 
model (line \ref{algo:ConsistentPrediction:CLM}) over ${\bf N}\leq n\Plus1$ snapshots of 
$\mathcal{S}_0^n$, noted $\mathcal{S}_0^n|_{\kappa}$, 
for prediction at 
$n\Plus1$. $\kappa$ refers to the proportion of snapshots with concept drift used for modelling.
$\mathcal{S}_0^n|_{\kappa}$ is selected to capture (i) 
$\mathcal{S}_0^n(n)$ i.e., the closest (temporally) to $\mathcal{S}_0^n(n\Plus1)$ (line \ref{algo:ConsistentPrediction:Init}), (ii) 
knowledge in the most (lines \ref{algo:ConsistentPrediction:F2}-\ref{algo:ConsistentPrediction:F3}) significant concept drifts (Definition \ref{def:Significance} - line \ref{algo:ConsistentPrediction:F1}), 
(iii) any other snapshots to meet ${\bf N}$ 
(line \ref{algo:ConsistentPrediction:F5}).

The model is trained, following Stochastic Gradient Descent method \cite{Zhang2004}, using samples of the form $\left\{({\bf e}_i, g_i)\;|\;{i\in\{1,\ldots,{\bf N}\}}\right\}$ where ${\bf e}_i$ is the entailment vector for $\mathcal{S}_0^n(i)$ and ${\bf v}(g_i)$ is the target variable in $[0,1]$, capturing the estimation of $g_i$ to be entailed. $g_i$ is determined by the entailment vector.
%
The goal is to learn a linear scoring function $f({\bf e}_i)=a^T{\bf e}_i+b$ with model parameters $a\in\mathbf{R}^{\bf N}$ and $b\in {\bf R}$ which minimizes the following objective function $O_j$:
%
\vspace{-0.19cm}
\begin{equation}\label{eq:loss}
\begin{aligned}
O_j(a,b) \doteq \sum_{i=1}^{\kappa} \omega_{ij} L({\bf v}(g_i),f({\bf e}_i)) + \alpha R(a),
\end{aligned}
\end{equation}
\vspace{-0.02cm}
\noindent where $L$ represents the loss function (e.g., Hinge for SVM or $\log$ for logistic regression).
$R$ and $\alpha$ control the variance of the model in case of over fitting. 
$R$ is a regularization term and $\alpha > 0$ is a non-negative hyperparameter.
%
%
Each sample $({\bf e}_i, g_i)$ in \eqref{eq:loss} is weighted by $\omega_{ij}$ in \eqref{eq:g1} (resp. \eqref{eq:g2})
%
for filtering out 
consistent (resp. inconsistent) historical snapshots w.r.t. \eqref{eq:ConsistencyVector}. 
$\omega_{ij}$ controls the consistency level of models.
%
%
%
\begin{small}
\setlength{\columnsep}{0.6cm}
\begin{multicols}{2}\noindent
\begin{equation}\label{eq:g1}
\hspace*{-0.2cm}\omega_{ij} \doteq 
\begin{cases} 
0,  & \mbox{if }  {c}_{ij} > 0 \\ 
\Minus {c}_{ij} & \mbox{else},
\end{cases}
\end{equation}
\begin{equation}\label{eq:g2}
\hspace*{-0.6cm}\omega_{ij} \doteq 
\begin{cases} 
0,  & \mbox{if }  {c}_{ij} < 0 \\ 
{c}_{ij} & \mbox{else},
\end{cases}
\end{equation}
\end{multicols}
\end{small}

%
\vspace{-0.35cm}
\begin{algorithm}[h!]
\small

\KwIn{
(i) Axioms $\mathcal{O}:\langle\mathcal{T}, \mathcal{A}, \mathcal{G}\rangle$,
(ii) 
Stream $\mathcal{S}_0^n$,
(iii) Lower limit $\varepsilon\in(0,1]$, 
%
(iv) Minimum 
drift significance $\sigma_{\min}$,
(v) Proportion $\kappa$ of snapshots with concept drift used for modelling, 
(vi) Number of snapshots ${\bf N}$ for modelling.
} 

\KwResult{$f$: Model for prediction at time $n\Plus1$ of $\mathcal{S}_0^n$.
}

\Begin{
$\mathcal{S}_0^n|_\kappa\leftarrow\{n\}$; \emph{\% Initial snapshots' set for learning model.}\nllabel{algo:ConsistentPrediction:Init}\\
\emph{\% Computation of the most significant drifts w.r.t. $\varepsilon$, $\sigma_{\min}$.}\nllabel{algo:ConsistentPrediction:F1}\\
$\mathbb{S}\leftarrow \tt{SignificantDrift}\langle \mathcal{O}, \mathcal{S}_0^n, \varepsilon, \sigma_{\min}\rangle$;\\
\emph{\% Selection of $\sfrac{\kappa}{{\bf N}}$ snapshots involved in concept drifts $\mathbb{S}$.}\\
\ForEach{$i\in[0,n]$ s.t. $(i, i\Plus1)\in\mathbb{S}\;\wedge\;|\mathcal{S}_0^n|_\kappa|<\sfrac{\kappa}{{\bf N}}$\nllabel{algo:ConsistentPrediction:F2}}{
$\mathcal{S}_0^n|_\kappa\leftarrow \mathcal{S}_0^n|_\kappa \cup \{i\}$;\nllabel{algo:ConsistentPrediction:F3}
}
\emph{\% Expand $|\mathcal{S}_0^n|_\kappa|$ with snapshots not involved in $\mathbb{S}$.}\\
{add $\sfrac{1\Minus\kappa}{{\bf N}}$ point(s) of time $i$ to $\mathcal{S}_0^n|_\kappa$ s.t. $(i, i\Plus1)\notin\mathbb{S}$;\nllabel{algo:ConsistentPrediction:F5}}\\
\vspace{0.1cm}
\emph{\% Learning model $f$ using \eqref{eq:loss} with weight \eqref{eq:g1} or \eqref{eq:g2} \nllabel{algo:ConsistentPrediction:CLM}.}\\
\vspace{-0.35cm}
\begin{align}
&\text{(i)}\;\;\min_{(a,b)\in{\bf R}^{\bf N}\times {\bf R}} \sum_{i=1}^{\bf N} \omega_{ij} L({\bf v}(g_i),f({\bf e}_i)) + \alpha R(a)\nonumber\\
&\text{(ii)}\;\;f({\bf e}_i)=a^T{\bf e}_i+b\nonumber
\end{align}
\Return{$f$};
}
\caption{{\label{algo:ConsistentPrediction}}\small{{\tt{[A2]PredictionModel}}$\langle \mathcal{O}, \mathcal{S}_0^n, \varepsilon, \sigma_{\min}, \kappa, {\bf N}\rangle$}}
\end{algorithm}
\vspace{-0.35cm} 

%
{\tt [A1\Minus2]} parameterized with low $\varepsilon$, 
$\sigma_{\min}$, high $\kappa$ and \eqref{eq:g1} as weight (line \ref{algo:ConsistentPrediction:CLM} (i))
favours 
models with significant concept drifts for prediction, 
which 
supports diversity and prediction changes in the model.
%
Parameterized 
with high $\varepsilon$, 
$\sigma_{\min}$, low $\kappa$ and \eqref{eq:g2} as weight, it will capture more consistent models.

%
The linear scoring function $f$ in \eqref{eq:loss} has the following advantages compared to more complex structures such as artificial neural network: (i) better handling over-fitting with reduced sample size - due to filtering of snapshots not involved in significant concept drifts (lines \ref{algo:ConsistentPrediction:F2}-\ref{algo:ConsistentPrediction:F3} in {\tt [A2]}), 
(ii) ensuring efficient, scalable learning and prediction for online contexts.

\section{Experimental Results}\label{sec:evaluation}

We report accuracy 
by 
(i) studying the impact of {\tt [A2]} and semantic embeddings on    
concept drift 
for \emph{Dublin-Ireland},
\emph{Beijing-China} 
applications, and 
(ii) comparing its results with state-of-the-art approaches. 
The system is tested on: 
16 Intel(R) Xeon(R) CPU E5-2680, 2.80GHz cores, 
32GB RAM.

\vspace{0.1cm}
\noindent $\bullet$ \textbf{Beijing Air Quality (BAQ) Context:} 
BAQ index, ranging from Good (value $5$), Moderate ($4$), Unhealthy ($3$), Very Unhealthy ($2$) to Hazardous ($1$), 
can be forecasted 
using data streams of $B_1$: air pollutants 
and meteorology elements $B_2$: wind speed, $B_3$: humidity observed in $12$ sites.
The variation of context, characterising a concept drift problem, makes BAQ index difficult to be forecasted specially with potentially erroneous sensor data. 
The semantics of context is based on a DL $\mathcal{ALC}$ ontology, including $48$ concepts, $13$ roles, $598$ axioms.
An average of $6,500$ RDF triples are generated at each update (i.e., every $600$ seconds) for all streams.

\vspace{0.1cm}
\noindent $\bullet$ \textbf{Dublin Bus Delay (DBD) Context:} 
DBD, classified as Free (value $5$), Low ($4$), Moderate ($3$), Heavy ($2$),
Stopped ($1$) can be forecasted   
using reputable live stream contextual data (Table \ref{tab:DataSets}) related to $D_1$: bus GPS location, delay, 
congestion status, $D_2$: weather conditions, 
$D_3$: road incidents.  
However bus delay is subject to major changes due the high degree of context variation. The latter, responsible for the concept drift problem, 
impacts accuracy the most.
We consider an extended settings by enriching data using a DL $\mathcal{EL}^{++}$ domain ontology ($55$ concepts, $19$ roles and $25,456$ axioms).

\vspace{-0.1cm}
\begin{table}[h!]
\vspace{-0.08cm}
\scriptsize{
\centering
\begin{tabular}[t]{l|c||c|c|c}
\hline
Feature & Size (Mb) & Frequency of & \#Axioms  & \#RDF Triples \\
DataSet&  per day & Update (seconds) & per Update  & per Update \\\hline\hline
${D}_1$: Bus & 120 & $40$& 3,000 & 12,000\\\hline
${D}_2$: Weather & 3 & $300$ & 53 & 318\\\hline
${D}_3$: Incident & 0.1 & $600$  & 81 & 324\\\hline
\end{tabular}
\vspace{-0.2cm}
\caption{\label{tab:DataSets}Datasets Details of Dublin Bus Delay Context.}
}
\vspace{-0.15cm}
\end{table}
\vspace{-0.1cm}

%
\noindent $\bullet$ \textbf{Validation:} 
%
Accuracy is measured by comparing predictions with real-time situations in cities, where results can be easily extracted and compared from all different approaches.

\vspace{0.1cm}
\noindent $\bullet$ \textbf{Semantic Impact:} 
%
Table~\ref{res:table1} reports the positive impact of using semantic embeddings (cf. columns with \checkmark) on all forecasting tasks, with an average improvement of $26.6\%$.
The embeddings naturally identify semantically (dis-)similar contexts by capturing temporal (in-)consistency(ies).
%
Thus, they 
help in building discriminating models, even for long-term-ahead forecasting as shown for $\Delta = 18$-hours 
with a $33.1\%$ gain.
The difference of results between Beijing and Dublin confirms the importance of semantic 
expressivity 
i.e., $40$+ times more 
axioms with a $71.5\%$ gain of accuracy for Dublin.

%
\begin{table}[h]
\vspace{-0.25cm}
\scriptsize{
\centering.
\begin{tabular}[bt]{|@{ }c@{ }|l|c|c||c|c||c|c|c|}\hline
\multirow{2}{*}{City}&\multirow{2}{*}{${\bf\mathcal{ID}} : Features$} & \multicolumn{2}{|c||}{$\Delta$=6 hours} & \multicolumn{2}{|c||}{$\Delta$=12 hours} & \multicolumn{2}{|c|}{$\Delta$=18 hours}\\ \cline{3-8}
&&\ding{55}&\checkmark&\ding{55}&\checkmark&\ding{55}&\checkmark \\ \hline \hline 
\multirow{4}{*}{\begin{sideways}Beijing\end{sideways}}&${\bf\mathcal{B}_1}: B_1$  & .351 & .398  & .344 & .441  &.261 &.342\\ \cline{2-8}
&${\mathcal{B}_2}: B_1$ + $B_2$ & .398 & .449 & .350 & .453 & .279 &.371 \\ \cline{2-8}
&${\mathcal{B}_3}: B_1$ + $B_3$  & .421 & .508  & .373 & .459  &.282 &.379\\ \cline{2-8}
&${\mathcal{B}_4}: B_1$ + $B_2$ + $B_3$  & .501 & .611  & .389 & .478  &.286 &.393\\ \hline
\multicolumn{2}{|c|}{Average Improvement (\%)} & \multicolumn{2}{|c||}{17.206}  & \multicolumn{2}{|c||}{25.890}  & \multicolumn{2}{|c|}{33.954}\\ \hline \hline 
\multirow{4}{*}{\begin{sideways}Dublin\end{sideways}}&${\bf\mathcal{D}_1}: D_1$  & .455 & .514  & .387 & .441  &.321 &.387\\ \cline{2-8}
&${\mathcal{D}_2}: D_1$ + $D_2$  & .534 & .688  & .499 & .553  &.361 &.497 \\ \cline{2-8}
&${\mathcal{D}_3}: D_1$ + $D_3$  & .601 & .701& .513 & .645  &.371&.547 \\ \cline{2-8}
&${\mathcal{D}_4}: D_1$ + $D_2$ + $D_3$  & .659 & .921  & .533 & .834  &.601 &.745 \\ \hline
\multicolumn{2}{|c|}{Average Improvement (\%)} & \multicolumn{2}{|c||}{24.550}  &\multicolumn{2}{|c||}{26.744}  & \multicolumn{2}{|c|}{32.408}\\ \hline 
\end{tabular}
\vspace{-0.15cm}
\caption{
Forecasting Accuracy of 
{\tt [A1]} with (\checkmark) and without (\ding{55}) Semantic Embeddings in Beijing and Dublin Contexts. 
%
%
}
\label{res:table1}
}
\end{table}

\vspace{-0.3cm}
\noindent $\bullet$ \textbf{Feature Impact:} 
Table~\ref{res:table1} emphasises an extra accuracy gain 
when increasing the number of features
i.e., average gain of $68.5\%$ accuracy from $1$ to $3$ features.
%

%
\vspace{0.1cm}
\noindent $\bullet$ \textbf{Concept Drift} is characterised by $48\%$ and $51\%$ of stream updates in respectively BAQ and DBD. We focus on $4$ levels of concept drifts, ranging from a $.2$ to $.8$ significance $\forall \Delta\in\{6, 12, 18\}$.
Level $0$ does not capture any change.
Figure \ref{res:conceptDrift} reports the proportion of severity levels in concept drift for BAQ and DBD e.g., $7\%$ are level-$.4$ for BAQ while $19\%$ are level-$.8$ for DBD.
Although accuracy clearly declined by increasing the severity level e.g., from $96\%$ (level-$.2$) to $21\%$ (level-$.8$) in DBD, 
semantic embeddings has shown to significantly boost accuracy.
More interestingly the more severity the higher improvement i.e., (average) $36\%$ to $56\%$ on level-$.4$ to $.8$.
Thus integrating semantics is a way forward to build machine learning models which are robust to changes, potential  erroneous sensor data and concept drifts.

\vspace{0.1cm}
\noindent $\bullet$ \textbf{Model Consistency Impact:} 
Figures \ref{res:consistent} and \ref{res:inconsistent} report accuracy of forecasting tasks on a {\bf H}igh and {\bf L}ow {\bf C}oncept {\bf D}rift versions of the Dublin and Beijing problems, noted HCD and LCD. $85\%$ and $15\%$ of snapshots are impacted by concept drift respectively in HCD and LCD.

\begin{figure}[h]
\vspace{0.5cm}
\centering
\includegraphics[angle=0,scale=0.38,bb=40 0 620 330]{./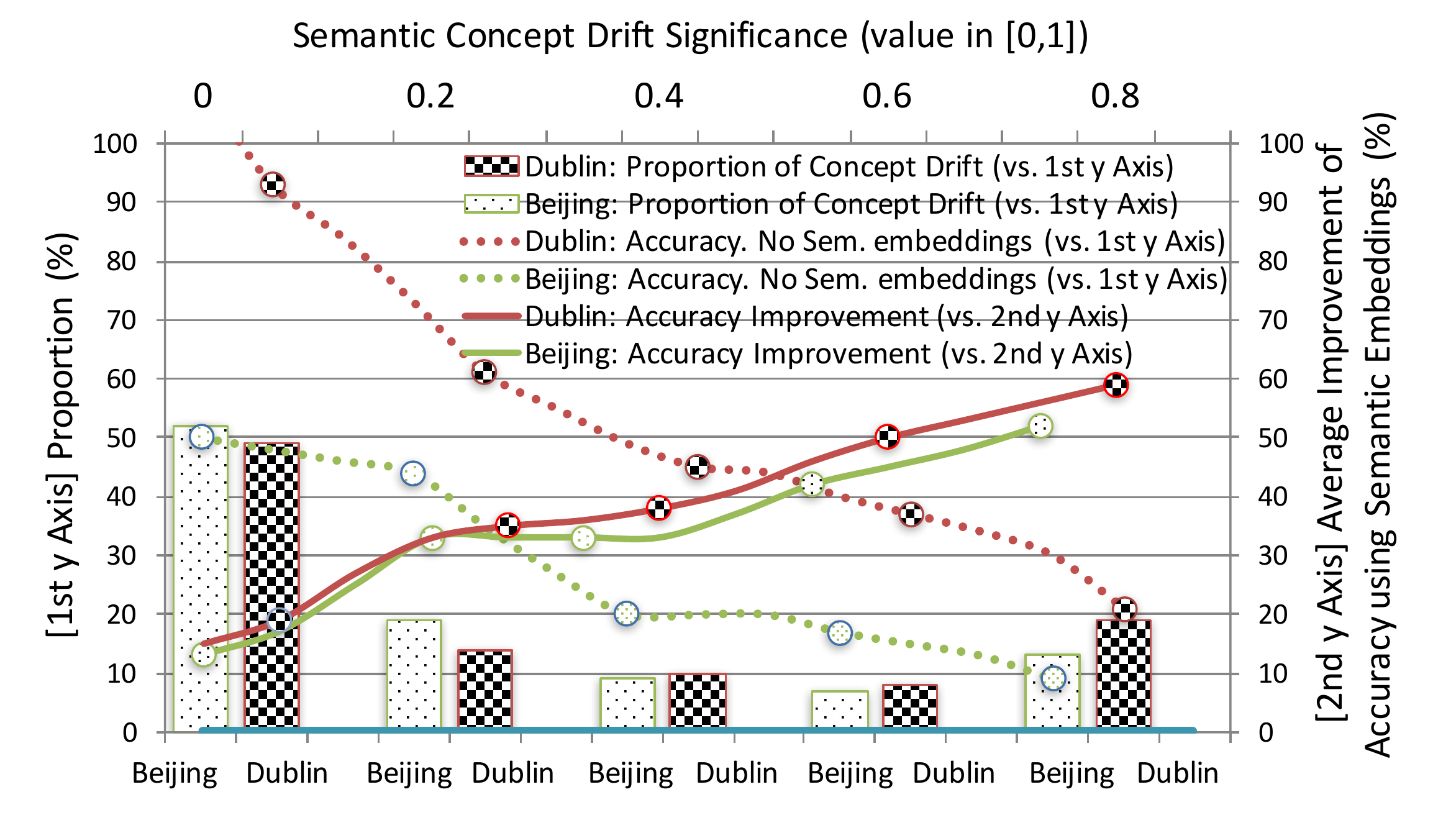}
\vspace{-0.4cm}
\caption{Forecasting Accuracy vs. 
Drift Significance.}
\label{res:conceptDrift}
\end{figure}
\vspace{-0.3cm}

{\tt [A1\Minus2]} is 
evaluated with 
$3$ variances of $(\varepsilon, \sigma_{\min}, \kappa)$: (i) consistent model with $(.9, .9, .1)$, (ii) mixed model with $(.5, .5, .5)$, (iii) inconsistent model with $(.1, .1, .9)$. ${\bf N}=1,500$.
%
Figure \ref{res:consistent} (resp. \ref{res:inconsistent}) reports that prediction with consistent (resp. inconsistent) samples outperforms models with inconsistent (resp. consistent) samples by about $318\%$ (resp. $456\%$) and $254\%$ (resp. $322\%$) in respectively Beijing and Dublin 
for LCD (resp. HCD). 
These results confirm the importance of semantic encoding, which support the encoding of concept drift and consistency properties in our approach.

%
\vspace{-0.1cm}
\begin{figure}[h]
\vspace{0.23cm}
\centering
\includegraphics[angle=0,scale=0.26,bb=362 0 620 340]{./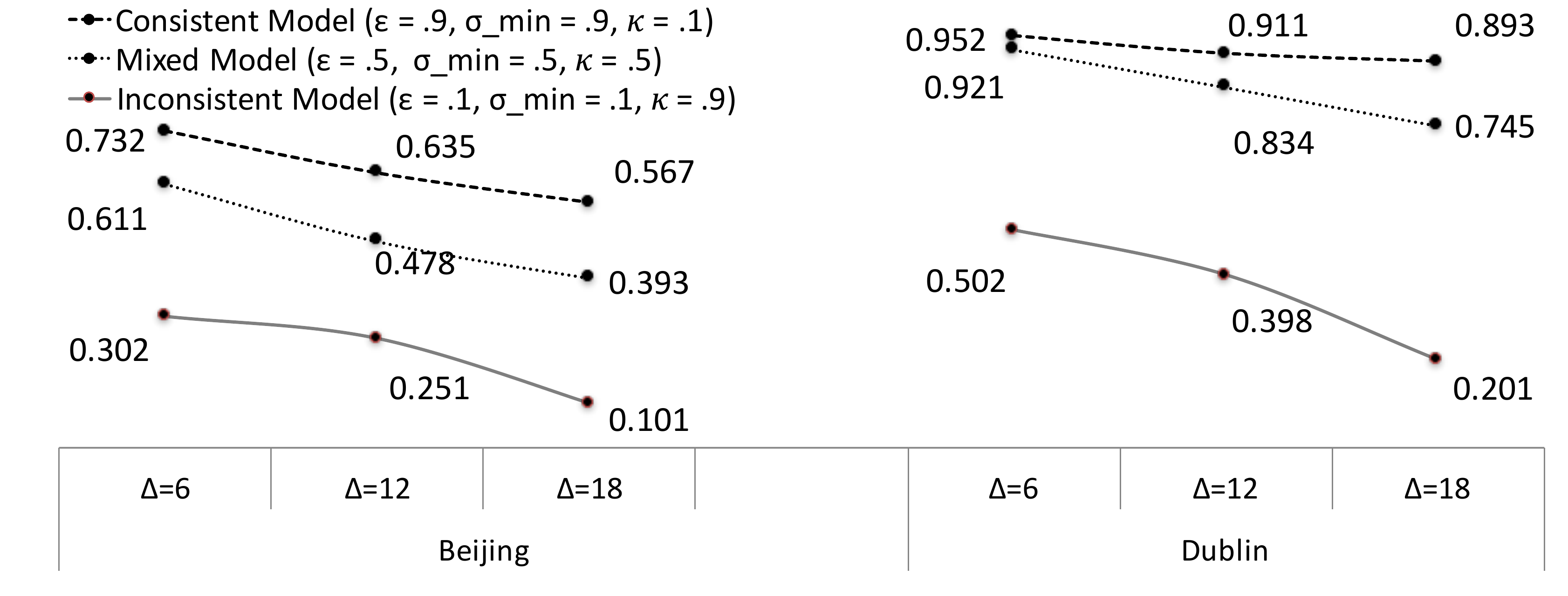}
\vspace{-0.4cm}
\caption{Model Consistency \& Forecasting Accuracy. Low Concept Drift. ($15\%$ of snapshots impacted by concept drift).}
\label{res:consistent}
\end{figure}

\vspace{-0.1cm}
\begin{figure}[h]
\vspace{-0.23cm}
\centering
\includegraphics[angle=0,scale=0.26,bb=362 0 620 340]{./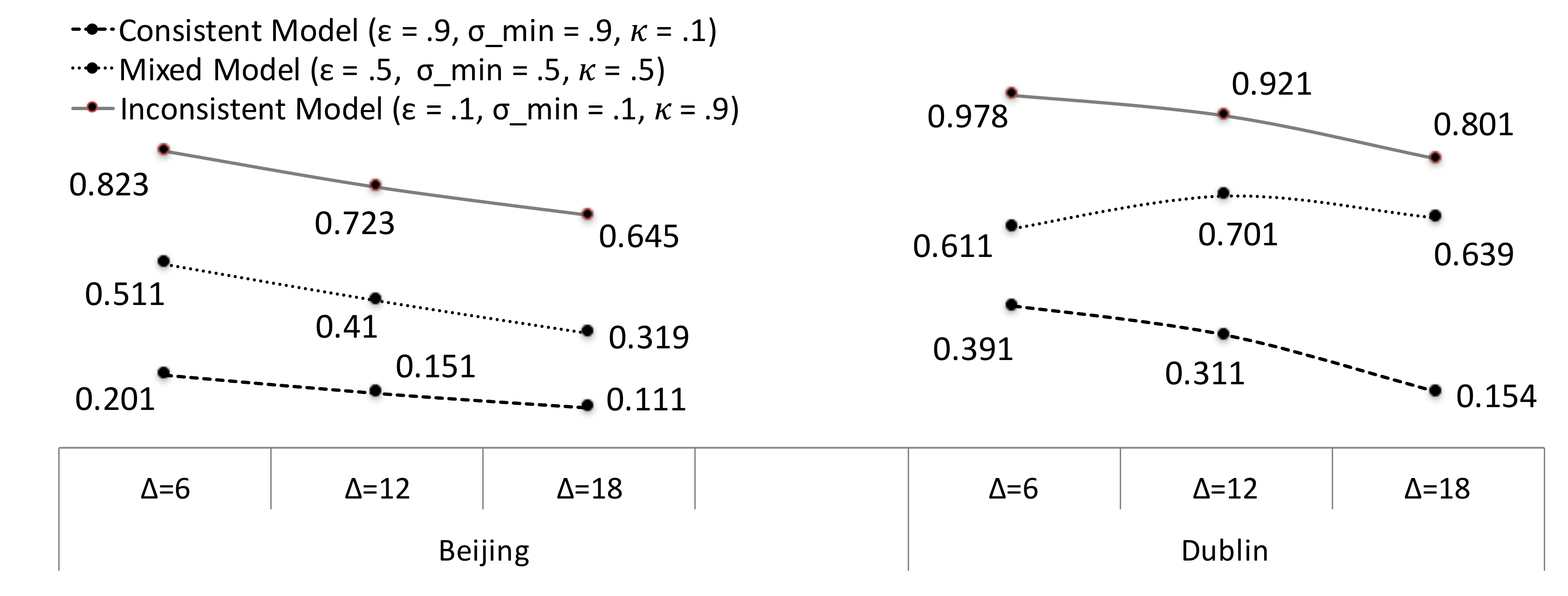}
\vspace{-0.4cm}
\caption{Model Consistency \& Forecasting Accuracy. High Concept Drift. ($85\%$ of snapshots impacted by concept drift).}
\label{res:inconsistent}
\end{figure}

\vspace{-0.2cm}
\noindent $\bullet$ \textbf{Baseline:} 
%
We compare 
our approach
${\bf{\mathcal{B}_i}}, {\bf{\mathcal{D}_{i,1 \leq i \leq 4}}}$ in Table \ref{res:table1} with (i) weighted {\bf S}tochastic {\bf G}radient {\bf D}escent (SGD), (ii) {\bf A}uto-{\bf R}egressive {\bf I}ntegrated {\bf M}oving {\bf A}verage (ARIMA), a standard time-series forecasting model \cite{saboia1977autoregressive}, and two methods addressing concept drift: (iii) {\bf A}daptive-{\bf S}ize {\bf H}oeffding {\bf T}ree (ASHT), (iv) {\bf AD}aptive {\bf WIN}dowing bagging (ADWIN) \cite{Bifet2009,Bifet2010}.
ARIMA considers one stream variable: BAQ index for Beijing and DBD for Dublin while SGD, ASHT and ADWIN 
use all features of ${\bf{\mathcal{B}_4}}, {\bf{\mathcal{D}_{4}}}$ and favour recent snapshots during learning.
The forecasted real value in $[0,5]$ is discretised back using our categories. 
Results with optimum parameters for {\tt [A1-2]} are reported. 
%
Figure \ref{res:baselines} emphasises that our approach (with $3$ levels of features: $\mathcal{B}_4$, $\mathcal{D}_4$) over-performs state-of-the-art methods. The more features the more accurate.
More interestingly classic learning algorithms do not generalise as well as {\tt [A1\Minus2]}
in presence of semantics although SGD, ASHT,
ADWIN integrate all features.
%
{\tt [A1\Minus2]} shows to be very robust with less variance. Experiments also demonstrate that semantic (in-)consistency matters more than recentness during learning.

%
\begin{figure}[h]
\vspace{-0.35cm}
\centering
\subfigure[Beijing Context]{
\label{res:baselines_dublin}
\includegraphics[width=0.23\textwidth]{./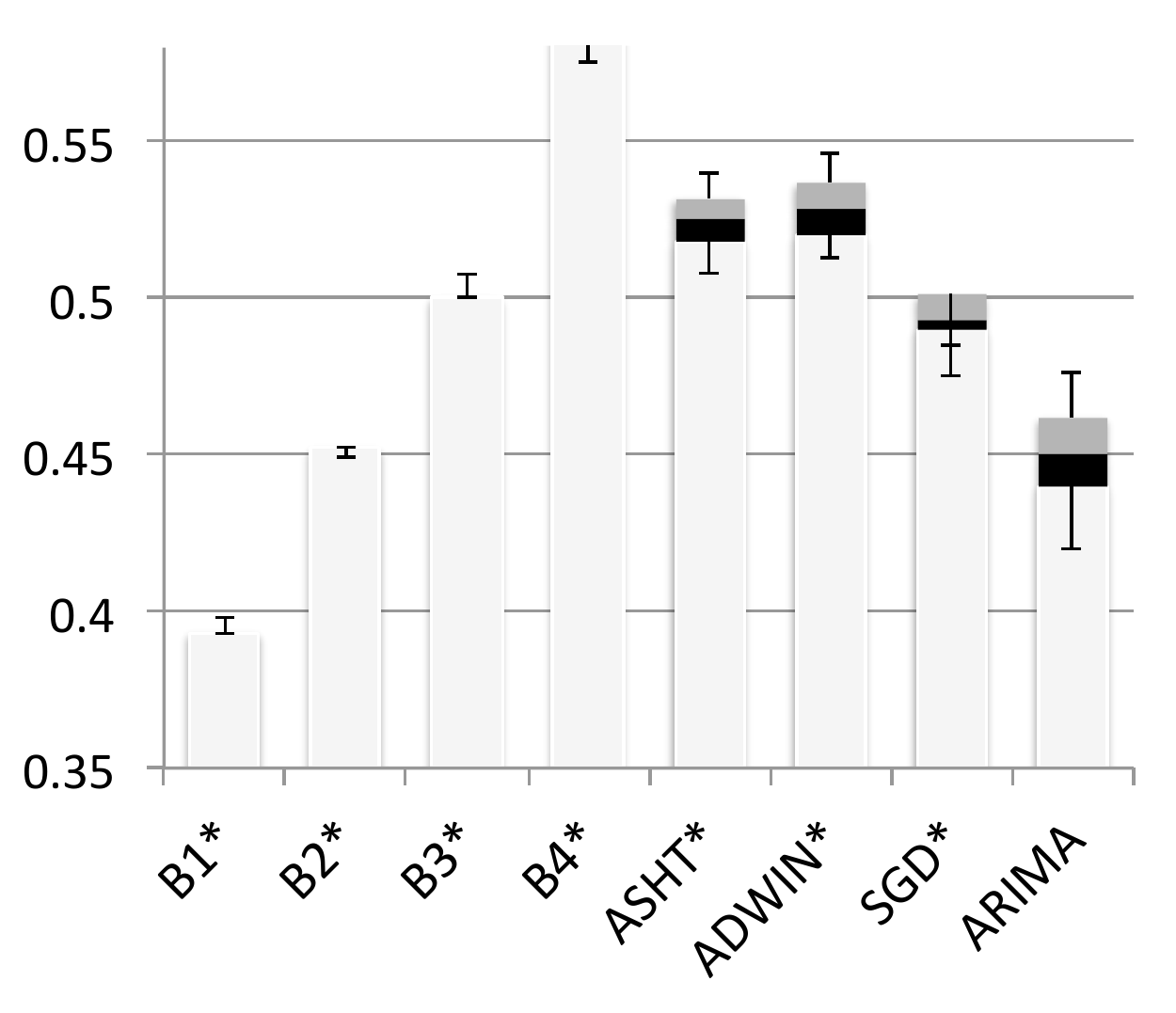}}
\subfigure[Dublin Context]{
\label{res:baselines_beijing}
\includegraphics[width=0.23\textwidth]{./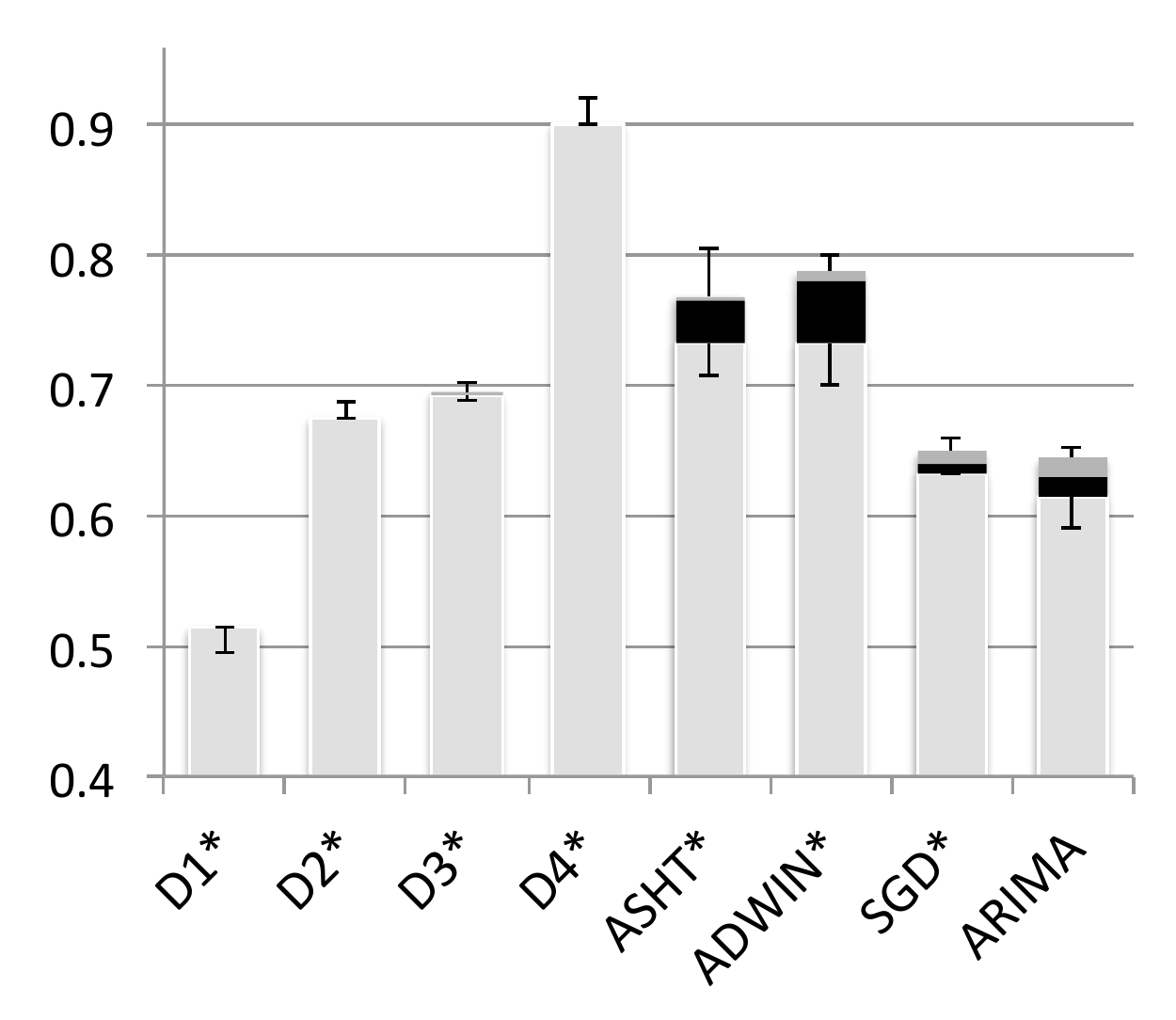}}
\vspace{-0.4cm}
\caption{Baseline Comparison of Forecasting Accuracy.
($\Delta = 6$, Approach$^{*}$: Approach with all features).
}
\label{res:baselines}
\vspace{-0.2cm}
\end{figure}

%
\noindent $\bullet$ \textbf{Lessons Learnt:} 
Adding semantics to classic learning model has clearly shown the positive impact on accuracy, specially in presence of concept drifts.
Our approach also demonstrates that the more semantic axioms the more robust is the model 
and hence the higher the accuracy.
%
Axiom numbers are critical as they drive and control the semantics of data in streams, which improve accuracy, concept drift detection but not scalability (not reported in the paper). It is worst with more expressive DLs due to consistency checks, and with limited impact on accuracy. 
%
Lightweight semantics such as RDF-S would highly limit the scope of our model given the omission of inconsistency checking cf. Figures \ref{res:consistent}-\ref{res:inconsistent}.

%
%
%
%
%
%
%
%

%
\vspace{-0.1cm}
\section{Conclusion}\label{sec:conclusion}

Our approach, exploiting the semantics of data streams, tackles the problem of learning and prediction with concept drifts. 
Semantic reasoning and machine learning have been combined by revisiting features embeddings 
as semantic embeddings i.e., vectors capturing consistency and entailment of any snapshot in ontology streams.
Such embeddings are then exploited in a context of supervised stream learning to learn models, which are robust to 
concept drifts i.e., sudden and abrupt (inconsistent) prediction changes. 
Our approach has been shown to be adaptable and flexible to basic learning algorithms. 
In addition to demonstrate accurate prediction with concept drifts in Dublin 
and Beijing 
forecasting applications, experiments have shown that encoding semantics in models is a way towards outperforming state-of-the-art approaches.

In future work we will investigate the impact of semantic embeddings in other Machine Learning models.

\bibliographystyle{named}
\bibliography{ijcai17-stream}

\begin{thebibliography}{}

\bibitem[\protect\citeauthoryear{Agrawal \bgroup \em et al.\egroup
  }{1996}]{AgrMSTV96}
Rakesh Agrawal, Heikki Mannila, Ramakrishnan Srikant, Hannu Toivonen, and
  A.~Inkeri Verkamo.
\newblock Fast discovery of association rules.
\newblock In {\em Advances in Knowledge Discovery and Data Mining}, pages
  307--328. AAAI/MIT Press, 1996.

\bibitem[\protect\citeauthoryear{Baader \bgroup \em et al.\egroup
  }{2003}]{BaaN03}
Franz Baader, Diego Calvanese, Deborah~L. McGuinness, Daniele Nardi, and
  Peter~F. Patel-Schneider, editors.
\newblock {\em The Description Logic Handbook: Theory, Implementation, and
  Applications}, 2003.

\bibitem[\protect\citeauthoryear{Baader \bgroup \em et al.\egroup
  }{2005}]{BaaBL05}
Franz Baader, Sebastian Brandt, and Carsten Lutz.
\newblock Pushing the el envelope.
\newblock In {\em IJCAI}, pages 364--369, 2005.

\bibitem[\protect\citeauthoryear{Beck \bgroup \em et al.\egroup
  }{2016}]{DBLP:conf/ijcai/BeckDE16}
Harald Beck, Minh Dao{-}Tran, and Thomas Eiter.
\newblock Equivalent stream reasoning programs.
\newblock In {\em Proceedings of the Twenty-Fifth International Joint
  Conference on Artificial Intelligence, {IJCAI} 2016, New York, NY, USA, 9-15
  July 2016}, pages 929--935, 2016.

\bibitem[\protect\citeauthoryear{Bifet \bgroup \em et al.\egroup
  }{2009}]{Bifet2009}
Albert Bifet, Geoff Holmes, Bernhard Pfahringer, Richard Kirkby, and Ricard
  Gavald{\`a}.
\newblock New ensemble methods for evolving data streams.
\newblock In {\em Proceedings of the 15th ACM SIGKDD International Conference
  on Knowledge Discovery and Data Mining}, pages 139--148. ACM, 2009.

\bibitem[\protect\citeauthoryear{Bifet \bgroup \em et al.\egroup
  }{2010}]{Bifet2010}
Albert Bifet, Geoff Holmes, Richard Kirkby, and Bernhard Pfahringer.
\newblock Moa: Massive online analysis.
\newblock {\em The Journal of Machine Learning Research}, 11:1601--1604, 2010.

\bibitem[\protect\citeauthoryear{Bifet \bgroup \em et al.\egroup
  }{2015}]{bifet2015efficient}
Albert Bifet, Gianmarco de~Francisci~Morales, Jesse Read, Geoff Holmes, and
  Bernhard Pfahringer.
\newblock Efficient online evaluation of big data stream classifiers.
\newblock In {\em Proceedings of the 21st ACM SIGKDD International Conference
  on Knowledge Discovery and Data Mining}, pages 59--68. ACM, 2015.

\bibitem[\protect\citeauthoryear{Bishop}{2006}]{bishop2006pattern}
Christopher~M Bishop.
\newblock Pattern recognition.
\newblock {\em Machine Learning}, 128:1--58, 2006.

\bibitem[\protect\citeauthoryear{Cao and Tay}{2003}]{cao2003support}
Li-Juan Cao and Francis~EH Tay.
\newblock Support vector machine with adaptive parameters in financial time
  series forecasting.
\newblock {\em Neural Networks, IEEE Transactions on}, 14(6):1506--1518, 2003.

\bibitem[\protect\citeauthoryear{Cheung \bgroup \em et al.\egroup
  }{1996}]{CheHNW96}
David~W Cheung, Jiawei Han, Vincent~T Ng, and CY~Wong.
\newblock Maintenance of discovered association rules in large databases: An
  incremental updating technique.
\newblock In {\em Data Engineering, 1996. Proceedings of the Twelfth
  International Conference on}, pages 106--114. IEEE, 1996.

\bibitem[\protect\citeauthoryear{Chu \bgroup \em et al.\egroup
  }{2011}]{ChuZLTT11}
Wei Chu, Martin Zinkevich, Lihong Li, Achint Thomas, and Belle Tseng.
\newblock Unbiased online active learning in data streams.
\newblock In {\em Proceedings of the 17th ACM SIGKDD international conference
  on Knowledge discovery and data mining}, pages 195--203. ACM, 2011.

\bibitem[\protect\citeauthoryear{Coble and Cook}{2000}]{coble2000real}
Jeffrey Coble and Diane~J. Cook.
\newblock {Real-time learning when concepts shift}.
\newblock In {\em Proceedings of the 13th International Florida Artificial
  Intelligence Research Society Conference}, pages 192--196. AAAI Press, 2000.

\bibitem[\protect\citeauthoryear{Domingos and
  Hulten}{2000}]{domingos2000mining}
Pedro Domingos and Geoff Hulten.
\newblock Mining high-speed data streams.
\newblock In {\em Proceedings of the sixth ACM SIGKDD international conference
  on Knowledge discovery and data mining}, pages 71--80. ACM, 2000.

\bibitem[\protect\citeauthoryear{Gal{\'a}rraga \bgroup \em et al.\egroup
  }{2013}]{galarraga2013amie}
Luis~Antonio Gal{\'a}rraga, Christina Teflioudi, Katja Hose, and Fabian
  Suchanek.
\newblock {AMIE: Association rule mining under incomplete evidence in
  ontological knowledge bases}.
\newblock In {\em Proceedings of the 22nd International Conference on World
  Wide Web}, pages 413--422. ACM, 2013.

\bibitem[\protect\citeauthoryear{Gama and Kosina}{2011}]{GamK11}
Jo{\~{a}}o Gama and Petr Kosina.
\newblock Learning decision rules from data streams.
\newblock In {\em {IJCAI} 2011, Proceedings of the 22nd International Joint
  Conference on Artificial Intelligence, Barcelona, Catalonia, Spain, July
  16-22, 2011}, pages 1255--1260, 2011.

\bibitem[\protect\citeauthoryear{Gao \bgroup \em et al.\egroup
  }{2007a}]{DBLP:conf/icdm/GaoFH07}
Jing Gao, Wei Fan, and Jiawei Han.
\newblock On appropriate assumptions to mine data streams: Analysis and
  practice.
\newblock In {\em Proceedings of the 7th {IEEE} International Conference on
  Data Mining {(ICDM} 2007), October 28-31, 2007, Omaha, Nebraska, {USA}},
  pages 143--152, 2007.

\bibitem[\protect\citeauthoryear{Gao \bgroup \em et al.\egroup
  }{2007b}]{gao2007general}
Jing Gao, Wei Fan, Jiawei Han, and S~Yu Philip.
\newblock A general framework for mining concept-drifting data streams with
  skewed distributions.
\newblock In {\em SDM}, pages 3--14. SIAM, 2007.

\bibitem[\protect\citeauthoryear{Huang and Stuckenschmidt}{2005}]{HuaS05}
Zhisheng Huang and Heiner Stuckenschmidt.
\newblock Reasoning with multi-version ontologies: A temporal logic approach.
\newblock In {\em International Semantic Web Conference}, pages 398--412, 2005.

\bibitem[\protect\citeauthoryear{Kolter and Maloof}{2007}]{Kolter2007}
J.~Zico~JZ Kolter and Marcus~A. Maloof.
\newblock {Dynamic weighted majority: An ensemble method for drifting
  concepts}.
\newblock {\em The Journal of Machine Learning Research}, 8:2755--2790, 2007.

\bibitem[\protect\citeauthoryear{L{\'{e}}cu{\'{e}}}{2015}]{DBLP:conf/ijcai/Lecue15}
Freddy L{\'{e}}cu{\'{e}}.
\newblock Scalable maintenance of knowledge discovery in an ontology stream.
\newblock In {\em Proceedings of the Twenty-Fourth International Joint
  Conference on Artificial Intelligence, {IJCAI} 2015, Buenos Aires, Argentina,
  July 25-31, 2015}, pages 1457--1463, 2015.

\bibitem[\protect\citeauthoryear{Lee \bgroup \em et al.\egroup
  }{2003}]{LeeCL03}
Chang{-}Hung Lee, Ming{-}Syan Chen, and Cheng{-}Ru Lin.
\newblock Progressive partition miner: An efficient algorithm for mining
  general temporal association rules.
\newblock {\em {IEEE} Trans. Knowl. Data Eng.}, 15(4):1004--1017, 2003.

\bibitem[\protect\citeauthoryear{Saboia}{1977}]{saboia1977autoregressive}
Joao Luiz~Maurity Saboia.
\newblock Autoregressive integrated moving average (arima) models for birth
  forecasting.
\newblock {\em Journal of the American Statistical Association},
  72(358):264--270, 1977.

\bibitem[\protect\citeauthoryear{Zhang}{2004}]{Zhang2004}
Tong Zhang.
\newblock Solving large scale linear prediction problems using stochastic
  gradient descent algorithms.
\newblock In {\em Proceedings of the 21st International Conference on Machine
  Learning}, page 116. ACM, 2004.

\end{thebibliography}

\end{document}